\newtheorem{prop}{Proposition}
\title{Learning Representations on the Unit Sphere: Investigating Angular Gaussian and Von Mises-Fisher Distributions for Online Continual Learning}
\author{Nicolas Michel\thanks{This work has received support from Agence Nationale de la Recherche (ANR) for the project APY, with reference ANR-20-CE38-0011-02. This work was granted access to the HPC resources of IDRIS under the allocation 2022-AD011012603 made by GENCI}\\
Univ Gustave Eiffel, CNRS, LIGM \\
F-77454 Marne-la-Vallée, France \\
\texttt{nicolas.michel@esiee.fr}
\And
Giovanni Chierchia \\
Univ Gustave Eiffel, CNRS, LIGM \\
F-77454 Marne-la-Vallée, France \\
\texttt{giovanni.chierchia@esiee.fr}
\And
Romain Negrel \\
Univ Gustave Eiffel, CNRS, LIGM \\
F-77454 Marne-la-Vallée, France \\
\texttt{romain.negrel@esiee.fr}
\And
Jean-François Bercher \\
Univ Gustave Eiffel, CNRS, LIGM \\
F-77454 Marne-la-Vallée, France \\
\texttt{jf.bercher@esiee.fr}
  % examples of more authors
  % \And
  % Coauthor \\
  % Affiliation \\
  % Address \\
  % \texttt{email} \\
  % \AND
  % Coauthor \\
  % Affiliation \\
  % Address \\
  % \texttt{email} \\
  % \And
  % Coauthor \\
  % Affiliation \\
  % Address \\
  % \texttt{email} \\
  % \And
  % Coauthor \\
  % Affiliation \\
  % Address \\
  % \texttt{email} \\
}
\begin{document}

\maketitle

\begin{abstract}
We use the maximum a posteriori estimation principle for learning representations distributed on the unit sphere. We propose to use the angular Gaussian distribution, which corresponds to a Gaussian projected on the unit-sphere and derive the associated loss function. We also consider the von Mises-Fisher distribution, which is the conditional of a Gaussian in the unit-sphere. The learned representations are pushed toward fixed directions, which are the prior means of the Gaussians; allowing for a learning strategy that is resilient to data drift. This makes it suitable for online continual learning, which is the problem of training neural networks on a continuous data stream, where multiple classification tasks are presented sequentially so that data from past tasks are no longer accessible, and data from the current task can be seen only once. 
To address this challenging scenario, we propose a memory-based representation learning technique equipped with our new loss functions. Our approach does not require negative data or knowledge of task boundaries and performs well with smaller batch sizes while being computationally efficient. 
We demonstrate with extensive experiments that the proposed method outperforms the current state-of-the-art methods on both standard evaluation scenarios and realistic scenarios with blurry task boundaries. For reproducibility, we use the same training pipeline for every compared method and share the code at https://github.com/Nicolas1203/ocl-fd.
\end{abstract}

\begin{figure*}[!ht]
    \centering
    \includegraphics[width=0.6\textwidth]{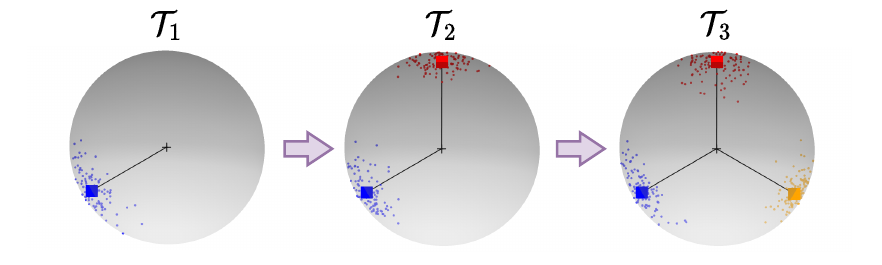}
    \vskip -0.22in
    \caption{Training with fixed directions overview. Each class is assigned to a fixed vector of the standard basis. When changing task $\mathcal{T}$, new classes are encountered and mapped to remaining standard basis vectors. Best viewed in color.}
    \label{fig:overview}
\end{figure*}

\section{Introduction}
% \freefootnote{This work has received support from Agence Nationale de la Recherche (ANR) for the project APY, with reference ANR-20-CE38-0011-02 and was granted access to the HPC resources of IDRIS under the allocation 2022-AD011012603 made by GENCI.}
Deep neural networks can achieve very impressive performances when trained on independent and identically distributed data sampled from a fixed set of classes. In a real-world scenario, however, it may be desirable to train a model on a continuous data stream, where multiple classification tasks are presented sequentially so that the data from the old tasks are no longer accessible when learning new ones and data from the current task can be seen only once. This scenario is known as \emph{online Continual Learning} (CL) and poses a challenge for standard learning algorithms because the distribution of data changes over time (continual setting), and data are not accessible more than once (online setting). If such factors are not adequately taken into account, a trained model may suffer from Catastrophic Forgetting (CF), which is the loss of previously learned knowledge when learning new tasks or from new data. Online CL has seen growing interest in recent years \cite{guo_ocm_2022,aljundi_online_2019,he_online_2021,michel_contrastive_2022,gu_dvc_2022,vedaldi_gdumb_2020,rolnick_experience_2019,mai_online_2021,mai_supervised_2021,lin_pcr_2023}, and several variations have been proposed \cite{hsu_re-evaluating_2019, mai_online_2021}. This paper focuses on class-incremental CL.

Among the different approaches for online CL \cite{mai_online_2021}, memory-based or replay-based methods have shown the best performances for the online setting \cite{rolnick_experience_2019,buzzega_dark_2020,guo_ocm_2022,mai_online_2021, mai_supervised_2021}. In these approaches, a subset of past data is stored while training. When encountering a new batch from the stream, another batch is retrieved from memory and combined with the current batch for training. This mitigates forgetting by seeing past data along with current data. Recently, representation learning techniques combined with replay strategies have shown impressive performances for unsupervised CL \cite{fini_cassle_2022,madaan_lump_2022,davari_probing_2022} and supervised CL \cite{mai_supervised_2021,guo_ocm_2022}. However, contrastive learning-based methods \cite{mai_supervised_2021} often require large batch sizes to benefit from negative samples \cite{gu_dvc_2022}, and distillation-based methods \cite{guo_ocm_2022} require knowledge of task boundaries.

In this work, we propose an algorithm for memory-based representation learning based on a new loss function. Our loss function is devised from the principle of \emph{maximum a posteriori} estimation under the hypothesis that the latent representations are distributed on the unit sphere. Such a hypothesis explicitly takes into account the fact that normalizing the latent vectors is a standard practice in contrastive learning. Specifically, we investigate the angular Gaussian distribution and the Mises-Fisher distribution, both designed for modeling antipodal symmetric directional data.
The peculiarity of the resulting loss function is that the learned representations are pushed toward fixed directions, allowing for a learning strategy that is resilient to data drift and thus suitable for online continual learning. In summary, the contributions of this work are as follows.
\begin{itemize}
    \item We devise a new loss function for representation learning based on the principle of maximum a posteriori estimation. We investigate both the angular Gaussian distribution and the von Mises-Fisher distribution for modeling representations that are restricted on the unit sphere.
    \item The key idea is to essentially assign pre-determined, mutually separated class means in the hidden space (in this case chosen to be the one-hot vectors) and training inputs from each class are coerced to increase overlap with its own class mean. The proposed loss function is resilient to data drift and does not require negative data or knowledge of task boundaries and performs well with smaller batch sizes while being computationally efficient.
    \item We show experimentally on benchmark datasets for online continual learning that the proposed approach outperforms state-of-the-art methods in most scenarios, and is robust to blurry task boundaries.
\end{itemize}

The paper is organized as follows. Section \ref{sec:related} describes related work and formally defines the problem addressed. Section \ref{sec:model_def} explains the mechanisms of our method. Section \ref{sec:exp} presents our experimental results. Section \ref{sec:model_analysis} analyses the behaviour of the proposed approach, and section \ref{sec:conclusion} concludes the paper.

\section{Related Work}
\label{sec:related}
%In this section we describe notable work related to ours.
\paragraph{Representation Learning}
In representation learning, it is common to work with latent vectors projected onto the sphere \cite{chen_simple_2020,mai_supervised_2021, grill_byol_2020, chen_simsiam_2020, wang_understanding_2022, zbontar_barlow_2021}. Contrastive learning is a popular family of approaches for representation learning 
% The main idea is to learn a representation space where similar samples (called positives) should be as close as possible, whereas dissimilar samples (called negatives) should be as far away as possible. 
and has been applied to online CL in previous work, producing state-of-the-art results \cite{mai_online_2021, guo_ocm_2022, cha_co2l_2021,michel_contrastive_2022}. However, contrastive losses require large batch size to sample enough negatives. In this work, we introduce a loss adapted to CL which does not need negative samples.

\paragraph{Class Incremental Learning (CIL)}
One of the most popular continual learning scenario is CIL \cite{hsu_re-evaluating_2019}, which refers to learning from a sequence of tasks, where each task is composed of non-overlapping classes. Formally, consider $\{\mathcal{T}_1,\cdots,\mathcal{T}_K\}$ a learning sequence of $K$ tasks, with $\{\mathcal{D}_1,\cdots,\mathcal{D}_K\}$ the corresponding dataset sequence with $\mathcal{D}_k=(X_k, Y_k)$ the data-label pairs. In CIL, it is assumed that $\forall k,j \in \{1,\cdots,K\}$ if $k\neq j$ then $Y_k\cap Y_{k_2}=\emptyset$ and the number of classes in each task is the same. In this study, we also refer to this setup as \textit{clear} boundaries, meaning that task boundaries are clearly defined as no overlap between tasks exists.

\paragraph{Online Continual Learning}
In online CL, a new constraint is added by restricting the model to seeing the data only once. This problem has been demonstrated to be significantly harder than its offline counterpart and has been the main focus of various recent works \cite{aljundi_online_2019, rolnick_experience_2019,guo_ocm_2022,gu_dvc_2022,buzzega_dark_2020,caccia_new_2022}. Notably, replay-based methods have shown the best performances.
\paragraph{Replay-Based Methods}
In recent years, several methods using fixed memory for replaying past data have addressed online CL. Experience Replay \cite{rolnick_experience_2019} introduces the use of a Reservoir sampling strategy \cite{vitter_random_1985} to replay past data while training on the current task. A-GEM \cite{lopez-paz_gradient_2017} leverages memory data to constrain the current optimization step. DER++ \cite{buzzega_dark_2020} improves ER by adding knowledge distillation between tasks. SCR \cite{mai_supervised_2021} also capitalizes on replaying past data but uses a supervised contrastive loss \cite{khosla_supervised_2020}. OCM \cite{guo_ocm_2022} takes advantage of memory data in online CL with knowledge distillation and maximizes mutual information between previous and current representation with infoNCE \cite{oord_representation_2019}. Likewise, DVC \cite{gu_dvc_2022} combines rehearsal strategies and information maximization. Other strategies using no-memory data usually perform poorly in an online context. In this work, we also focus on online CL and leverage memory data with reservoir sampling. However, we introduce a new loss based on Maximum a Posteriori estimation, defined in section \ref{sec:model_def}.

\paragraph{Fixed Directions}
Recall that classification using cross entropy often ends with selecting components of the logit, which corresponds to a scalar product of the logit with basis vectors. In this sense, the usual practice uses a fixed classifier at the end of the network. This has been made more precise and generalized in previous works~\cite{pernici_incremental_2021,bojanowski2017unsupervised}. However, our theoretical motivations lead to a more general framework from which multiple loss functions can be derived. Additionally, contrary to our approach, proposed losses are computed on unnormalized vectors (not projected on the hypersphere) while in this work, we take into consideration normalized representations and adapt the expression of the proposed loss to the hyperspherical topology. 

\paragraph{Learning on the Unit Sphere}
Hyperspherical loss functions have been proposed in previous studies~\cite{hasnat_von_2017,mettes2019hyperspherical} leveraging von Mises-Fisher distributions. In this work, we propose a more general framework and introduce new loss functions based on Saw distributions~\cite{Saw78}.

\section{Proposed Approach}
\label{sec:model_def}
In this section, we define the proposed approach by introducing new losses for online CL.

\subsection{Representation Learning With Maximum a Posteriori Estimation}
\label{sec:rl_with_map}
%In the following, %we consider a classification problem where 
We are interested in estimating a function $\chi:\mathbb{R}^D \rightarrow [\![1, L]\!]$ that maps an input data to its corresponding class label, with $D \in \mathbb{N}$ the dimensionality of the input data, and $L\in \mathbb{N}$ the number of classes. Let us consider $\textbf{x} \in \mathbb{R}^D$. For a given class $c \in [\![1, L]\!]$, we are interested in the posterior probability
\begin{equation}
\label{eq:bayes}
\mathcal{P}(Y=c|X=\textbf{x})=\frac{\mathcal{P}(X=\textbf{x} | Y=c)\mathcal{P}(Y=c)}{\mathcal{P}(X=\textbf{x})},
\end{equation}
where $X$ and $Y$ are the random variables corresponding to the input and label. 
%As $X$ is a continuous variable, we have
%\begin{equation}
%    \label{eq:pdf1}
%    \mathbb{P}(X \in [x, x + \delta x] | Y=c) \simeq g_c(x) \delta x
%    % \mathbb{P}(X \in [x, x + \delta x] | Y=c) \simeq g_c(x) \delta x \ \text{ ;} \ \ \  \mathbb{P}(X \in [ x, x + \delta x])  \simeq g(x)\delta x
%\end{equation}
%with $g$ the p.d.f.\ of $X$ and $g_c$ the conditional p.d.f.\ of $X$ given $Y=c$. 
In terms of probability densities, we have the posterior density
\begin{equation}
    \label{eq:posterior_X}
    p(Y=c|X=\textbf{x})=\frac{g_c(\textbf{x})\pi_c}{\sum_{\ell=1}^Lg_\ell(\textbf{x})\pi_\ell}
\end{equation}
with $g_c$ the conditional p.d.f.\ of $X$ given $Y=c$ and $\pi_c=\mathcal{P}(Y=c)$ the prior probability for class $c$. 

Let us consider a latent variable $\textbf{z} \in \mathbb{R}^d$ produced by an encoder $\Phi_\theta(.)$ parameterized by $\theta$ such that $\textbf{z}=\Phi_\theta(\textbf{x})$, with $d \in \mathbb{N}$ the dimension of the latent space.
% This representation is the only information used for predicting the label, so we have a Markov chain $X \rightarrow Z \rightarrow Y$ with $\mathcal{P}(Y=c|X=x)=\mathcal{P}(Y=c|Z=z)$.
The posterior from Equation \eqref{eq:posterior_X} can be written according to the random variable $Z$:
\begin{equation}
    \label{eq:p_kz}
    p(Y=c|Z=\textbf{z})=\frac{f_c(\textbf{z})\pi_c}{\sum_{\ell=1}^Lf_\ell(\textbf{z})\pi_\ell}
\end{equation}
where $f_c$ is the conditional distribution of $Z$ given $Y=c$. The objective is now to find the best mapping from $X$ to $Z$ to maximize the posterior distribution. Namely, we aim to find the parameters $\theta^\star$ such that $\theta^\star = \arg\max_{\theta} \, p(Y|Z)$. For a set of $b$ independent observations $(\textbf{z}_i,y_i)_{1\leq i\leq b}$, this amounts to maximizing $p(y_1 \cdots y_b | \textbf{z}_1 \cdots \textbf{z}_b)=\prod_{c=1}^{L}\prod_{i\in I_c}p(Y=c|\textbf{z}_i)$ with $I_c=\{i \in [\![1, b]\!] \mid y_i = c\}$. The posterior distribution in Equation \eqref{eq:p_kz} can be thus expressed as
\begin{equation}
\label{eq:map_prod}
    p(y_1 \cdots y_b | \textbf{z}_1 \cdots \textbf{z}_b) = \prod_{c=1}^L \prod_{i \in I_c} \frac{f_c(\textbf{z}_i)\pi_c}{\sum_{\ell=1}^L f_\ell(\textbf{z}_i)\pi_\ell}.
\end{equation}
Eventually, we express the resulting loss in a batch-by-batch manner. For an incoming batch $\mathcal{B}=(\textbf{x}_i, y_i)_{1\leq i \leq b}$ of size $b$ we minimize Equation \eqref{eq:map_loss} with respect to parameters $\theta$ with $C_{\mathcal{B}}$ the classes in batch $\mathcal{B}$. We take $\pi_l=0$ for classes that are not represented in the current batch and $\pi_l=1$ otherwise.
\begin{equation}
    \label{eq:map_loss}
    \mathcal{L}_{MAP}(\mathcal{B}, \theta) = - \prod_{c \in C_{\mathcal{B}}}\prod_{i \in I_c} \frac{f_c(\Phi_\theta(\textbf{x}_i))\pi_c}{\sum_{\ell \in \mathcal{C}_\mathcal{B}} f_\ell(\Phi_\theta(\textbf{x}_i))\pi_\ell}.
    % -\sum_{c=1}^L\sum_{i \in I_c}\log{\frac{f_c(\Phi_\theta(x_i))\pi_c}{\sum_{n=1}^L f_n(\Phi_\theta(x_i))\pi_n}}
\end{equation}

%Note that the assumption of independent and identically distributed data does not hold in continual learning. We are, however compelled to use it to derive a closed-form expression, well knowing that it is an approximation. Further investigation of this matter is deferred to future work.

\subsection{Saw Distributions on the Unit Sphere}
\label{sec:saw_distrinutions}

The objective defined in Equation \eqref{eq:map_loss} requires us to express the conditional density functions explicitly. In representation learning, it is common to work with normalized vectors in the latent space \cite{chen_simple_2020,mai_supervised_2021, grill_byol_2020, chen_simsiam_2020, wang_understanding_2022, zbontar_barlow_2021}, making it natural to consider distributions on the unit sphere. In a seminal paper \cite{Saw78}, Saw presented a large class of distributions on the sphere parameterized by a mean direction $\boldsymbol{\mu}_c$ (with $\|\boldsymbol{\mu}_c\| = 1$) and a concentration $\kappa\ge0$. These distributions depend on a point $\textbf{z}$ on the sphere (with $\|\textbf{z}\| = 1$)  only through the scalar product $t = \textbf{z}^\top \boldsymbol{\mu}_c$, leading to the general form 
\begin{equation}\label{eq:saw_density}
f_c(\textbf{z}) = a_\kappa \, g_\kappa(\textbf{z}^\top \boldsymbol{\mu}_c).
\end{equation}
%where $\mu_c$ is a mean direction such that $\|\mu_c\| = 1$, $\kappa\ge0$ is the concentration parameter, 
Here above, $a_\kappa$ is a normalization constant, the scalar product corresponds to the cosine similarity and $g_\kappa(t)$ is a non-negative increasing function that must verify a normalizing condition derived from the tangent-normal decomposition of the sphere \cite{Saw78}.
%the points on the unit sphere in the mean direction \cite{Saw78}.
%\begin{equation}
%\frac{1}{B\bigl(\frac12,\frac{d-1}2\bigr)}\int_{-1}^1 g_\kappa(t) (1-t^2)^{(d-3)/2} dt = 1,
%\end{equation}
%with $B$ being the beta function, and $d$ the dimension of the latent space.

A recent study \cite{asao2022convergence} suggests that the representations learned by a neural network have a tendency to follow a Gaussian mixture model, which supports the assumption that the density on the sphere in Equation \eqref{eq:saw_density} shall be derived from such Gaussian mixture. 
In representation learning, we usually project the representation onto the unit sphere, which has a virtue in stabilizing training. Hence, we shall use the probability distribution of a projected  Gaussian distribution onto the unit sphere.

This distribution, in the isotropic case, is the Angular Gaussian (AGD) distribution, which is a Saw distribution \cite{Saw78} (of dimensions $d$) defined by 
\begin{equation}\label{eq:agd_dist}
g_\kappa^{AGD}(t) = 
e^{- \frac{1}{2}\kappa^2} \sum_{n=0}^{\infty} \frac{(\kappa t)^n \, \Gamma\left(\frac{d}{2} + \frac{n}{2}\right) }{n!~\Gamma\left( \frac{d}{2} \right)}.
\end{equation}
This expression of the AGD is given without proof in \cite{Saw78}. We prove it and give different expressions of the probability density for a general Gaussian vector $\mathcal{N}(\mu, \Sigma)$ projected onto the unit-sphere in Appendix.
% \ref{sec:appendixAGD}.
One of these expressions reduces to \eqref{eq:agd_dist} in the isotropic case $\Sigma = \sigma^2 I$, with $\kappa^2 = ||\boldsymbol{\mu}||^2/\sigma^2$, with $||\boldsymbol{\mu}||=||\boldsymbol{z}||=1$. 

Alternatively, starting from a normal distribution with isotropic covariance $\kappa^{-1}{\rm I}$ and mean $\boldsymbol{\mu}_c$, we can condition on $\|\textbf{z}\| = 1$ to obtain the Von Mises–Fisher (vMF) distribution, which is a Saw distribution with %$g(t, \kappa) = \exp(\kappa\,t)$
\begin{equation}\label{eq:vmf_dist}
%f_c^{\rm vMF}(z) = \frac{\exp(\kappa \, \mu_c^T z)}{C_{\rm vMF}(\kappa)}
g_\kappa^{\rm vMF}(t) = \exp(\kappa\,t).
\end{equation}

\subsection{Fixed Directions for Continual Learning}
\label{sec:mu_estimation}
Working with the Saw distributions in Section \ref{sec:saw_distrinutions} requires knowledge of the mean directions of the different classes. With $L$ equiprobable classes, it is natural to use a one-hot encoding of these classes; or, in other words, to assign them to the vertexes of the standard $L$-simplex. 
Alternatively, one could estimate these directions as a parameter of the network \cite{hasnat_von_2017}, or estimate them on the fly. The latter is cumbersome, however, as it requires large batches for the normalized mean estimation to be accurate, and the estimation is strongly biased at the start of each task when new classes are encountered. 
%To overcome these limitations, the mean directions are manually fixed on the unit sphere of $\mathbb{R}^d$.
% We want the distance between each mean to be maximal and to be the same for each mean. 
%A simple solution is to set each direction as a unit vector of the standard basis. 

Formally for a class $c$, we set $\boldsymbol{\mu}_c = \textbf{e}_c = [0, 0, \ldots , 1, 0, \ldots 0]$, a vector where every component is $0$ except the c-th component. With this strategy, the directions have the same distance of $\sqrt{2}$ from one another. Fixing directions also implies that they are independent of batch size or training step, which brings training stability. Such stability is crucial in CL, where new classes can easily conflict with older ones in the latent space \cite{caccia_new_2022}. Finally, fixed directions are obtained at no computational cost. We emphasize that in online CL, maintaining low computational overhead is also an important aspect, as new batches can come in fast succession while storage is limited. An overview of fixing mean direction onto the unit sphere is given in Figure \ref{fig:overview}.

\subsection{Loss Expression}
\label{sec:loss_final}
Taking the logarithm of the objective defined in Equation \eqref{eq:map_loss} we obtained the general loss expression:
% to scenarios where classes are unbalanced, we introduce class weights $\Omega = \{\omega_c \in \mathbb{R}^+  \mid c\in [\![1, L]\!]\}$. Re-weighting classes probability and taking the log leads to the general loss expression for representation learning:
\begin{equation}
\label{eq:map_geolog}
%\mathcal{L}_{\rm MAP}^{\log}(\mathcal{B}, \theta) = -\sum_{c\in C_{\mathcal{B}}} \omega_c\sum_{i \in I_c} \log{\frac{f_c(\Phi_\theta(x_i))\pi_c}{\sum_{\ell\in C_{\mathcal{B}}} f_\ell(\Phi_\theta(x_i))\pi_\ell}}
\mathcal{L}_{\rm MAP}^{\log}(\mathcal{B}, \theta) = -\sum_{c\in C_{\mathcal{B}}} \sum_{i \in I_c} \log{\frac{g_\kappa\big(\textbf{e}_c^\top\Phi_\theta(\textbf{x}_i)\big)\pi_c}{\displaystyle\sum_{\ell\in C_{\mathcal{B}}} g_\kappa\big(\textbf{e}_\ell^\top\Phi_\theta(\textbf{x}_i)\big)\pi_\ell}}
\end{equation}
with $C_{\mathcal{B}}$ the classes in batch $\mathcal{B}$, $\theta$ the parameters of the model, $I_c=\{i \in [\![1, b]\!] \mid y_i = c\}$ the data indexes for class $c$, and $\textbf{e}_c$ the c-th vector of the standard basis. Note that the above general expression includes the losses proposed in \cite{lin_pcr_2023, hasnat_von_2017} based on the von Mises-Fisher distribution.%, we also recover the expression of the loss obtained in \cite{} by computing the cross-entropy with respect to a von Mises-Fisher mixture. 

The Angular Gaussian loss with Fixed Directions (AGD-FD) is obtained by plugging \eqref{eq:agd_dist} into \eqref{eq:map_geolog}, and it is denoted by $\mathcal{L}_{\rm AGD\mbox{-}FD}(\mathcal{B}, \theta)$.  Likewise, the expression of the von-Mises-Fischer loss with Fixed Directions (vMF-FD) is obtained by plugging \eqref{eq:vmf_dist} into \eqref{eq:map_geolog}.  In all our experiments, we work under the assumption that every class has the same prior.

%The expression of the von-Mises-Fischer loss with Fixed Directions (vMF-FD) is obtained by plugging \eqref{eq:vmf_dist} into \eqref{eq:map_geolog}: %, and by simplifying we finally arrive at
%\begin{equation}\label{eq:loss_final}
%\mathcal{L}_{\rm vMF\mbox{-}FD}(\mathcal{B}, \theta) = -\sum_{c \in C_{\mathcal{B}}} \sum_{i \in I_c} \log{\frac{e^{-||\Phi_\theta(x_i) - e_c||^2 / 2v}}{\sum_{\ell \in C_{\mathcal{B}}} e^{-||\Phi_\theta(x_i) - e_\ell||^2 / 2v}}}
%\mathcal{L}_{\rm vMF\mbox{-}FD}(\mathcal{B}, \theta) = -\sum_{c \in C_{\mathcal{B}}} \sum_{i \in I_c} \log{\frac{e^{\kappa \, e_c^\top\Phi_\theta(x_i)}}{\sum_{\ell \in C_{\mathcal{B}}} e^{\kappa \, e_\ell^\top \Phi_\theta(x_i)}}}.
%\mathcal{L}_{\rm vMF\mbox{-}FD}(\mathcal{B}, \theta) = \sum_{c \in C_{\mathcal{B}}} \sum_{i \in I_c} \log\Big( \sum_{\ell \in C_{\mathcal{B}}} e^{\kappa (e_\ell - e_c)^\top \Phi_\theta(x_i)} \Big).
%\end{equation}

%with $C_{\mathcal{B}}$ the classes in batch $\mathcal{B}$ of size $b$, $v$ the variance, $\theta$ the parameters of the model, $I_c=\{i \in [\![1, b]\!] \mid y_i = c\}$ the data indexes for class $c$ and $e_c$ the c-th vector of the standard basis.

\subsection{Implementation Details}

% \vskip -0.3in
\paragraph{Multi-View Batch}
In online CL, the model has to overcome not only a changing data distribution but also the fact that data outside of memory are seen only once. To improve leveraging information from the incoming batch, each image is augmented several times to artificially increase the current batch size and show many 'views' of current data simultaneously. Specifically, for an incoming batch $\mathcal{B}$ and a random augmentation procedure $Aug(.)$, the model is trained on $\mathcal{B}_I=\mathcal{B} \bigcup_{i=1}^n Aug(\mathcal{B})$ with $n$ the number of views. We show in section \ref{sec:nviews_impact} leveraging a multi-view batch helps improve performances.

% \vskip -0.3in
\paragraph{Guillotine Regularization}
Similar to recent Representation Learning techniques, we apply Guillotine Regularization \cite{bordes_guillotine_2022}, to our model. Specifically, we express our model as $\Phi_\theta(.)=(\psi_{\theta_p} \circ \phi_{\theta_r})(.)$ with $\theta = \{\theta_p, \theta_r\}$. $\psi_{\theta_p}$ is referred to as the projection layer and $\phi_{\theta_r}$ the representation layer. The projection layer usually is a simple multilayered perceptron, while the representation layer is a full neural network (e.g. a ResNet). During training, latent variables $\textbf{z}=(\psi_{\theta_p} \circ \phi_{\theta_r})(\textbf{x})$ are used for computing the loss from Equation \eqref{eq:map_geolog}. For inference, the projection layer is dropped, and latent variables $h=\phi_{\theta_r}(\textbf{x})$ are used for the downstream task.

\vskip -0.1in
\subsection{Training Procedure}
Since our proposed approach leads to learning representation, an extra step is needed in order to obtain our final classifier. For fair comparison, we consider that only images stored in memory are available at the end of training. Similar to SCR, when evaluating, the entire memory is used for training an intermediate classifier $\mathcal{C}_w$, with parameters $w$, on top of the frozen representations from $\phi_{\theta_r}$. During the evaluation step $\mathcal{C}_w \circ \phi_{\theta_r}(.)$ is used. A detailed procedure of our method is presented in algorithm \ref{algo:gmap}.

\begin{algorithm}[!ht]
    \begin{algorithmic}\small
    \State\textbf{Input:}\ Data stream $\mathcal{S}$; Memory $\mathcal{M}$; Augmentation procedure $Aug(.)$; Representation Learning Model $\Phi_\theta(.)=(\psi_{\theta_p} \circ \phi_{\theta_r})(.)$; Intermediate classifier $\mathcal{C}_w(.)$; Number of augmentations $n$;
    \State\textbf{Output:}\ End-to-end classifier $\mathcal{C}_w \circ \phi_{\theta_r}(.)$; Memory $\mathcal{M}$;
    \State\texttt{Training Phase:}
        \State $\mathcal{M} \gets \{\}$
        \For{$\mathcal{B}_\mathcal{S} \in \mathcal{S}$}
                \State $\mathcal{B}_\mathcal{M} \gets Retrieve(\mathcal{M})$ \Comment{Random retrieval}
                \State $\mathcal{B}_C \gets \mathcal{B}_\mathcal{S} \cup \mathcal{B}_\mathcal{M}$
                \State $(X_I,Y_I) \gets \mathcal{B}_C \bigcup_{i=1}^n Aug(\mathcal{B}_C)$
                \State $\mathcal{B} \gets \left(\Phi_\theta(X_I), Y_I\right)$
                \State $\theta \gets Adam(\mathcal{L}_{\rm MAP}^{log}(\mathcal{B}, \theta))$ \Comment{Losses from Section \ref{sec:loss_final}}
                \State $\mathcal{M} \gets MemoryUpdate((X_\mathcal{S}, Y_\mathcal{S}), \mathcal{M})$ \Comment{Reservoir Sampling}
        \EndFor
    \State\texttt{Testing Phase:}
    \State $(X_\mathcal{M}, Y_\mathcal{M}) \gets \mathcal{M}$ \Comment{Get all stored memory data}
    \State $H_\mathcal{M} \gets \phi_{\theta_r}(X_\mathcal{M})$\Comment{Encode all memory data}
    \State $w \gets Train(H_\mathcal{M}, Y_\mathcal{M}, \mathcal{C}_w)$ \Comment{Train from frozen representations}
    \State\textbf{return:}\ $\theta_r$;\ $w$;\ $\mathcal{M}$
    \end{algorithmic}
\caption{Proposed Training Method}
\label{algo:gmap}
\end{algorithm}

\vspace{-0.2in}
\section{Experiments}
\label{sec:exp}
% In this section, we describe our experimental setup and analyse obtained results of our method when compared to the current state-of-the-art.
In this section, we describe our experimental setup and analyse obtained results of our method.
\subsection{Towards Real-World Scenarios}
\label{sec:real_world_scenarios}
\begin{figure*}[!ht]
    \centering
    \includegraphics[width=0.6\textwidth]{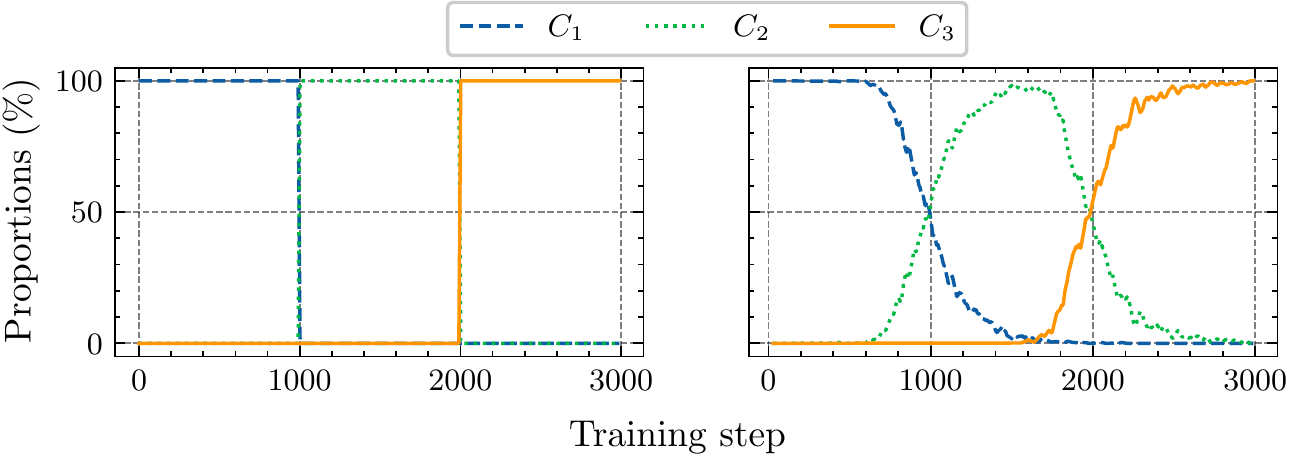}
    \vskip -0.1in
    \caption{Visualisation of class proportions in the incoming batch during training. The left side shows data drift with \textit{clear} boundaries while the right side shows data drift with \textit{blurry} boundaries for $\sigma=1500$ with 3 tasks, 10,000 images per task. $C_i$ corresponds to the classes of task $\mathcal{T}_i$ with $i \in [1,3]$.}
    \label{fig:blurry}
\end{figure*}
%In the following we discuss limitations of describe our evaluation protocol which tends to be more realistic than the standard CIL protocol.

% \paragraph{Long task sequences}
% Online CL tends to be a particularly realistic use case for neural network training. However, the online constraint makes sense only when the data stream is considerably long. Otherwise, storing the entire dataset should be feasible. In this work, we suggest focusing on especially large number of tasks for simulating scenarios where the online constraint is unavoidable. While in early studies, datasets such as MNIST and CIFAR-10 are extensively used, more recent studies use Tiny-ImageNet and other larger-scale datasets, which we believe should be the main experimental focus. In that sense, we focus mainly on longer sequences of tasks and show experimentally that performances on CIFAR-10 are a poor indicator of the performances on harder datasets. However, there is no guarantee that such information will be easily available during training. 

% \noindent\begin{minipage}{.45\textwidth}
\paragraph{Blurry Task Boundaries}
As introduced in section \ref{sec:related}, CIL setups assume clear task boundaries. As an effect, several methods rely on knowing when the task change occurs to use techniques such as distillation \cite{buzzega_dark_2020, guo_ocm_2022}. However, in a real-world scenario, there is no guarantee that task boundaries are clearly defined. Therefore simulating such an environment is crucial for testing models robustness. In that sense, we construct datasets with blurry task boundaries.
% \end{minipage}\hfill
% \begin{minipage}{.53\textwidth}
% \vskip -0.2in
\begin{algorithm}[H]
    \begin{algorithmic}[H]\small
    \State\textbf{Input:}\ Stream sequence with clear boundaries $\mathcal{S}_c$; Scale $\sigma$;
    \State\textbf{Output:}\ Stream sequence with blurry boundaries $\mathcal{S}_b$
    \State $\mathcal{S}_b \leftarrow \{\}$
    \While{$|\mathcal{S}_c| \geq 0$}
        \State $i \sim \mathcal{HN}(\sigma)$\Comment{Sample from a Half-Normal p.d.f.}
        \State $\mathcal{S}_b \gets \mathcal{S}_b \cup \mathcal{S}_c[i]$ \Comment{Add i-th element of $\mathcal{S}_c$ to $\mathcal{S}_b$}
        \State $\mathcal{S}_c \gets \mathcal{S}_c \backslash \{\mathcal{S}_c[i]\}$ \Comment{Drop i-th element of $\mathcal{S}_c$}
    \EndWhile
    \State\textbf{return:}\ $\mathcal{S}_b$
    \end{algorithmic}
\caption{Blurry task boundaries shuffling}
\label{algo:shuffling}
\end{algorithm}
% \end{minipage}

% % \begin{figure}[]
% %     \centering
% %     \subfigure[height=3cm]{\includegraphics[width=0.33\textwidth]{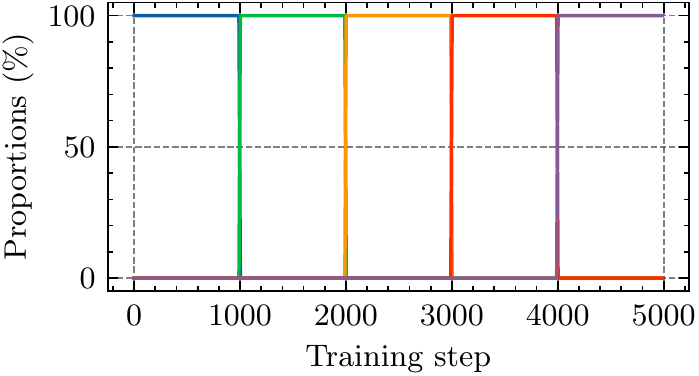}} 
% %     \subfigure[height=3cm]{\includegraphics[width=0.4\textwidth]{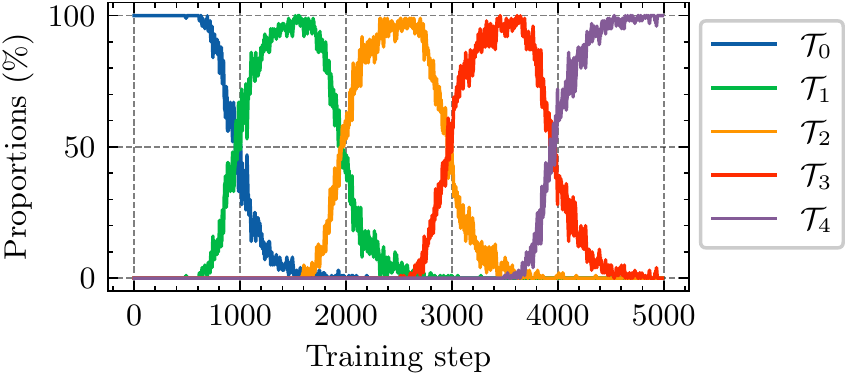}}
% %     \caption{(a) Blurry (b) Not blurry \label{fig:blurry}}
% % \end{figure}

% \begin{algorithm}[H]
%     \begin{algorithmic}[H]\small
%     \State\textbf{Input:}\ Stream sequence with clear boundaries $\mathcal{S}_c$; Scale $\sigma$;
%     \State\textbf{Output:}\ Stream sequence with blurry boundaries $\mathcal{S}_b$
%     \State $\mathcal{S}_b \leftarrow \{\}$
%     \While{$|\mathcal{S}_c| \geq 0$}
%         \State $i \sim \mathcal{HN}(\sigma)$\Comment{Sample from a Half-Normal p.d.f.}
%         \State $\mathcal{S}_b \gets \mathcal{S}_b \cup \mathcal{S}_c[i]$ \Comment{Add i-th element of $\mathcal{S}_c$ to $\mathcal{S}_b$}
%         \State $\mathcal{S}_c \gets \mathcal{S}_c \backslash \{\mathcal{S}_c[i]\}$ \Comment{Drop i-th element of $\mathcal{S}_c$}
%     \EndWhile
%     \State\textbf{return:}\ $\mathcal{S}_b$
%     \end{algorithmic}
% \caption{Blurry task boundaries shuffling\label{algo:shuffling}}
% \end{algorithm}

To create such a dataset we start from a dataset with clear boundaries and shuffle it using algorithm \ref{algo:shuffling}. We use a Half-Normal distribution with p.d.f $f_{HN}(y,\sigma)=\frac{\sqrt{2}}{\sigma\sqrt{\pi}}e^{-\frac{y^2}{2\sigma^2}}$ where $y\geq0$ and $\sigma$ is the scale parameter. The resulting data shift can be visualized in Figure \ref{fig:blurry}.

% MINIPAGE
% \noindent\begin{minipage}{.35\textwidth}
% \begin{figure}[H]
%     \centering
%     \includegraphics[\textwidth]{images/blurrynotblurry.pdf}
%     \caption{Blurry and not blurry \label{fig:blurry}}
% \end{figure}
% \end{minipage} \hfill
% \begin{minipage}{.60\textwidth}
% \vskip -0.15in
% \begin{algorithm}[H]
%     \begin{algorithmic}[H]\small
%     \State\textbf{Input:}\ Stream sequence with clear boundaries $\mathcal{S}_c$; Scale $\sigma$;
%     \State\textbf{Output:}\ Stream sequence with blurry boundaries $\mathcal{S}_b$
%     \State $\mathcal{S}_b \leftarrow \{\}$
%     \While{$|\mathcal{S}_c| \geq 0$}
%         \State $i \sim \mathcal{HN}(\sigma)$\Comment{Sample from a Half-Normal p.d.f.}
%         \State $\mathcal{S}_b \gets \mathcal{S}_b \cup \mathcal{S}_c[i]$ \Comment{Add i-th element of $\mathcal{S}_c$ to $\mathcal{S}_b$}
%         \State $\mathcal{S}_c \gets \mathcal{S}_c \backslash \{\mathcal{S}_c[i]\}$ \Comment{Drop i-th element of $\mathcal{S}_c$}
%     \EndWhile
%     \State\textbf{return:}\ $\mathcal{S}_b$
%     \end{algorithmic}
% \caption{Blurry task boundaries shuffling\label{algo:shuffling}}
% \end{algorithm}
% \end{minipage}

% \begin{figure}[]
%     \centering
%     \subfigure[]{\includegraphics[width=0.24\textwidth]{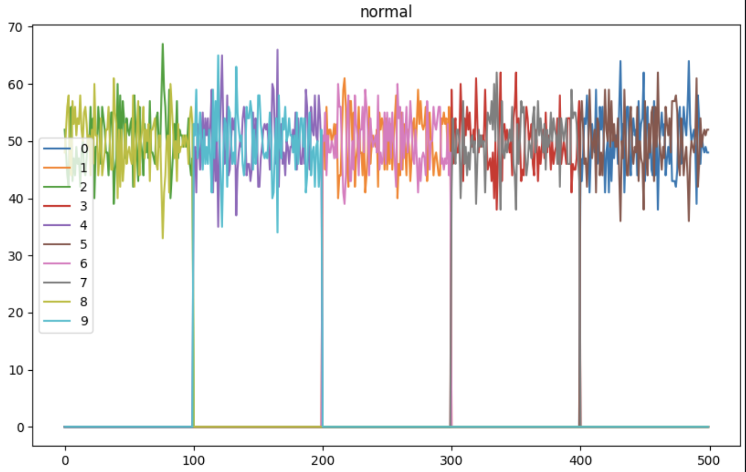}}
%     \subfigure[]{\includegraphics[width=0.24\textwidth]{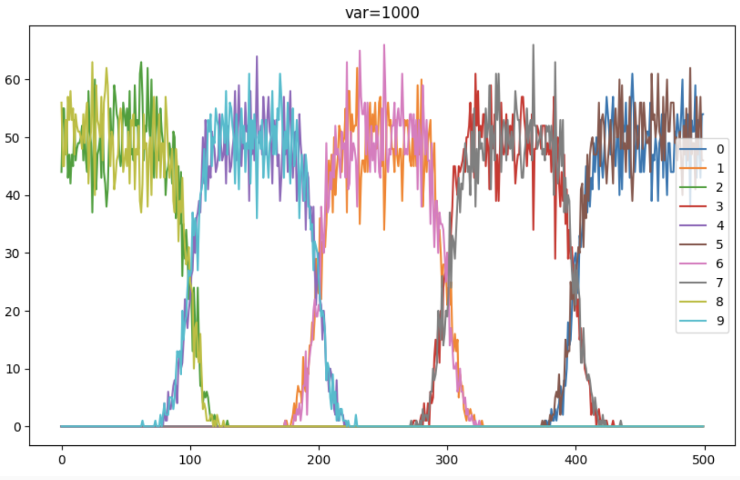}} 
%     \subfigure[]{\includegraphics[width=0.24\textwidth]{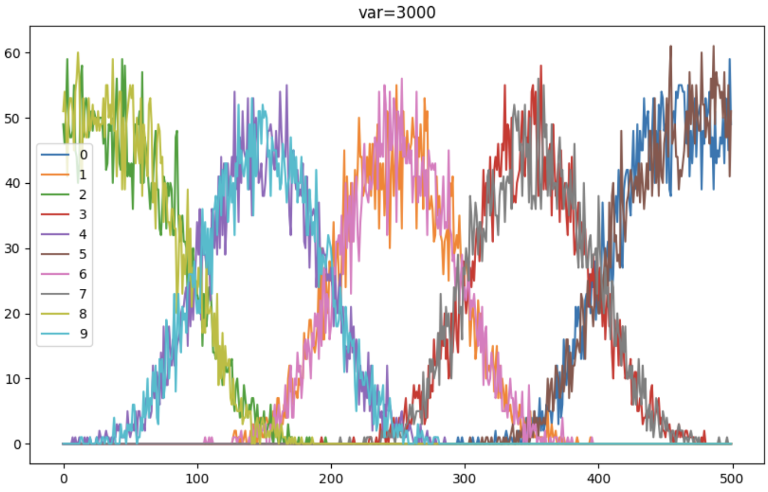}} 
%     \caption{(a) Not blurry (b) blurry (c) more blurry\label{fig:blurry}}
% \end{figure}

\paragraph{Random Label Order}
In previous work, experiments are often concluded with the same label order for every run \cite{guo_ocm_2022,buzzega_dark_2020}. However, studies show that label order is important in CL \cite{yoon_scalable_2020}. For fair comparison, we experimented for several runs and for each run, the order of the labels is randomly changed. This ensures more reproducibility and generalization of the proposed results.

\subsection{Evaluation Protocol}
Subsequent is an exhaustive description of the datasets and baselines we considered, as well as details regarding the implementation decisions of our experiment.

\paragraph{Datasets}
To build continual learning environments, variations of standard image classification datasets \cite{krizhevsky_learning_2009,le_tiny_2015} are used. As introduced in section \ref{sec:real_world_scenarios}, we distinguish two variations of original datasets, one with clear task boundaries (\textit{clear} variants) and the other with blurry task boundaries (\textit{blurry} variants). For \textit{clear} variants, each task is composed of non-overlapping classes while in \textit{blurry} variants we introduce some overlap with the procedure previously described. More details are given in Appendix~\cite{michel2023learning}.
% Specifically, we experimented on CIFAR10, CIFAR100 and Tiny ImageNet with blurry and clear task boundaries. \textbf{CIFAR10} contains 50,000 32x32 train images as well as 10,000 test images and is split into 5 tasks containing 2 classes each for a total of 10 distinct classes. \textbf{CIFAR100} contains 50,000 32x32 train images as well as 10,000 test images and is split into 10 tasks containing 10 classes each for a total of 100 distinct classes. \textbf{Tiny ImageNet} is a subset of the ILSVRC-2012 classification dataset and contains 100,000 64x64 train images as well as 10,000 test images and is split into 100 tasks containing 2 classes each for a total of 200 distinct classes.

\vskip -0.1in
\paragraph{Baselines}
Several state-of-the-art approaches for online CL are used for comparison.
% \textbf{offline} is the upper bound. The model is trained without online CL constraints. \textbf{fine-tuned} is the lower bound where the model is trained without strategy to alleviate forgetting.
\textbf{ER} \cite{rolnick_experience_2019} is a memory-based approach using a reservoir sampling \cite{vitter_random_1985} with a cross-entropy loss. \textbf{SCR} \cite{mai_supervised_2021} is a memory-based approach trained using the SupCon loss \cite{khosla_supervised_2020} and a reservoir sampling. \textbf{GDumb} \cite{vedaldi_gdumb_2020} is a method that stores data from stream in memory, ensuring a balanced class selection. The model is trained offline on memory data at inference time. \textbf{AGEM} \cite{chaudhry_efficient_2019} ensures that the average loss of past task does not increase by constraining the gradient using memory data. \textbf{DER++} \cite{buzzega_dark_2020} leverages knowledge distillation and reservoir sampling. \textbf{DVC} \cite{gu_dvc_2022} maximizes information from different images views. \textbf{ER-ACE} \cite{caccia_new_2022} leverages an asymmetric cross-entropy loss along with reservoir sampling. \textbf{OCM} \cite{guo_ocm_2022} maximizes mutual information with infoNCE \cite{oord_representation_2019} and uses reservoir sampling. \textbf{PFC}~\cite{pernici_incremental_2021} which combined experience replay and Pre-Fixed Classifiers.

\begin{table*}[!ht]
    \setlength{\tabcolsep}{4pt}
    \centering
    \resizebox{0.95\textwidth}{!}{\begin{tabular}{l|ll|lll|lll}
    \hline
    \multicolumn{1}{}{} &     \multicolumn{2}{c}{CIFAR10}           &    \multicolumn{3}{c}{CIFAR100}  &             \multicolumn{3}{c}{Tiny ImageNet}     \\
       \hline
    \multicolumn{1}{c}{Method} &         \multicolumn{1}{c}{M=500} &         \multicolumn{1}{c}{M=1k} &        \multicolumn{1}{c}{M=1k} &        \multicolumn{1}{c}{M=2k} &        \multicolumn{1}{c}{M=5k} &        \multicolumn{1}{c}{M=2k} &        \multicolumn{1}{c}{M=5k} &       \multicolumn{1}{c}{M=10k} \\
    \hline\hline
    AGEM    &  16.88{\scriptsize ±1.42} &  16.86{\scriptsize ±1.24} &     4.3{\scriptsize ±0.7} &   4.28{\scriptsize ±0.66} &   4.24{\scriptsize ±0.66} &   0.71{\scriptsize ±0.12} &   0.73{\scriptsize ±0.09} &   0.74{\scriptsize ±0.13} \\
    DER++   &  47.01{\scriptsize ±5.76} &   53.18{\scriptsize ±5.8} &  21.74{\scriptsize ±1.72} &  28.42{\scriptsize ±2.45} &  34.92{\scriptsize ±1.52} &   6.65{\scriptsize ±1.12} &  13.58{\scriptsize ±1.47} &  14.82{\scriptsize ±5.21} \\
    DVC     &   55.8{\scriptsize ±4.67} &  61.42{\scriptsize ±2.68} &  19.79{\scriptsize ±2.63} &   23.19{\scriptsize ±3.6} &  27.43{\scriptsize ±3.26} &   2.45{\scriptsize ±1.27} &   1.72{\scriptsize ±0.74} &   2.22{\scriptsize ±1.39} \\
    ER-ACE   &  54.15{\scriptsize ±1.89} &  61.36{\scriptsize ±1.99} &  27.71{\scriptsize ±0.82} &  32.95{\scriptsize ±1.12} &  39.66{\scriptsize ±1.15} &  15.27{\scriptsize ±1.07} &  22.69{\scriptsize ±1.53} &  27.49{\scriptsize ±1.42} \\
    ER      &  52.51{\scriptsize ±6.27} &  59.02{\scriptsize ±3.27} &   23.04{\scriptsize ±0.9} &  29.65{\scriptsize ±1.33} &  35.52{\scriptsize ±1.43} &   12.49{\scriptsize ±0.5} &  20.55{\scriptsize ±0.96} &  24.06{\scriptsize ±1.01} \\
    GDUMB   &  34.06{\scriptsize ±1.81} &  41.42{\scriptsize ±1.25} &  11.43{\scriptsize ±0.69} &  15.74{\scriptsize ±0.61} &  25.53{\scriptsize ±0.44} &   7.07{\scriptsize ±0.38} &   13.79{\scriptsize ±0.5} &   21.72{\scriptsize ±0.4} \\
    PFC & 57.21{\scriptsize ±0.84} & 62.90{\scriptsize ±0.92} & 24.1{\scriptsize±0.90} & 31.1{\scriptsize±1.60} & 38.6{\scriptsize±0.90} & 11.73{\scriptsize ±0.73} & 19.15{\scriptsize ±2.2} & 23.51{\scriptsize ±2.15} \\
    SCR     &  60.63{\scriptsize ±1.19} &  68.17{\scriptsize ±0.97} &  30.31{\scriptsize ±0.64} &  36.64{\scriptsize ±0.62} &   40.6{\scriptsize ±0.76} &  19.44{\scriptsize ±0.34} &  23.21{\scriptsize ±0.76} &   24.43{\scriptsize ±0.7} \\
    OCM &  \textbf{68.47{\scriptsize ±1.07}} &   \textbf{72.6{\scriptsize ±1.98}} &  29.09{\scriptsize ±1.41} &  36.67{\scriptsize ±1.01} &  42.49{\scriptsize ±1.45} &  19.38{\scriptsize ±0.61} &   27.52{\scriptsize ±0.8} &   32.3{\scriptsize ±1.34} \\
    % OCM {\small (from \cite{guo_ocm_2022})} &  \textbf{70.0{\scriptsize ±1.3}} &  \textbf{77.2{\scriptsize ±0.5}} & {28.1{\scriptsize ±0.3}} & {35.0{\scriptsize ±0.4}} & {42.4{\scriptsize ±0.5}} & {15.7{\scriptsize ±0.2}} & {21.2{\scriptsize ±0.4}} &  {27.0{\scriptsize ±0.3}} \\
\hline
vMF-FD &  60.17{\scriptsize±2.09} &  69.86{\scriptsize±1.02} &  \underline{32.98{\scriptsize±0.83}} &  \underline{41.04{\scriptsize±0.81}} &  \underline{50.39{\scriptsize±0.75}} &  \underline{19.85{\scriptsize±0.68}} &   \underline{28.8{\scriptsize±0.62}} &  \underline{34.21{\scriptsize±0.69}} \\ 
AGD-FD & \underline{61.29{\scriptsize±1.54}} &  \underline{70.06{\scriptsize±1.11}} &  \textbf{33.77{\scriptsize±0.84}} &  \textbf{41.85{\scriptsize±0.85}} &  \textbf{50.54{\scriptsize±0.67}} &  \textbf{20.46{\scriptsize±0.71}} &  \textbf{29.56{\scriptsize±0.68}} &  \textbf{34.77{\scriptsize±0.52}} \\
\hline\hline
\end{tabular}}
\caption{Final average accuracy (\%) for all methods on datasets CIFAR10 split into 5 tasks, CIFAR100 split into 10 tasks, and TinyIN split into 100 tasks for varying memory sizes $M$. Tasks boundaries are \textit{clear} in this setting. Results are computed over 10 runs, and the means and standard deviations are displayed. Best results are in bold. Second are underlined.}
\label{tab:avg_acc_clear}
\end{table*}

% \vskip -0.1in
\paragraph{Metrics} For evaluation, we use the average accuracy across all tasks at the end of training. This is also known as the final average accuracy \cite{kirkpatrick_overcoming_2017, hsu_re-evaluating_2019, mai_online_2021}.

\paragraph{Implementation Details}
For memory-based models, except GDumb, we use random retrieval and reservoir sampling for memory management. DVC, SCR, and OCM employ a two-layer MLP with 512 neurons for intermediate layers (ReLU activation) and 128 neurons for the projection layer. Since our model requires more dimensions than classes, the projection layer output size remains fixed at 512, but additional dimensions can be added as new classes appear. For all methods, we use a full untrained ResNet18. Stream batch size ($|X_S|$) is 10 and memory batch size ($|X_\mathcal{M}|$) is 64 for all methods. We use a Nearest Class Mean (NCM) classifier for intermediary classification. Other intermediate classifiers can be used but we observed little impact on the accuracy.

% \vskip -0.1in
\paragraph{Hyperparameter Search}
We performed a hyper-parameter search on CIFAR100 with a memory size $M=5k$ and 10 tasks. This search includes data augmentation. Best parameters were kept and used for training on every dataset. For fair comparison, we applied this strategy to all approaches, including ours. For OCM we used the parameters from the original paper. Details regarding parameter selection can be found in Appendix~\cite{michel2023learning}. 

% \vskip -0.1in
\paragraph{Data Augmentation} For DER++, ER-ACE and GDumb we use random crop and random horizontal flip as augmentation. For every other method, we use the same data augmentation procedure composed of random crop, random horizontal flip, color jitter, and random grayscale. For our method, we use a number of views $n=5$. OCM comes with additional data augmentation, which we did not change.

% \vskip -0.1in
\paragraph{Adaptation to Blurry Boundaries} For blurry boundaries, we adapted methods that required knowing task boundaries for this setup. Namely, we detected task changes with simple rules. (1) If new classes appear in the stream batch, a new task starts (2) Every task must be at least 100 batches long. This strategy helped to adapt OCM and DER++ but is limited as it can detect more tasks than desired.

% \vskip -0.1in
\subsection{Experimental Results}
% In the following, we discuss the performances obtained by our method.

% \vskip -0.1in
\paragraph{Clear Boundaries} The proposed approach has shown to outperform every considered baseline on CIFAR-100 and Tiny datasets with clear boundaries, as displayed in Table \ref{tab:avg_acc_clear}. This margin becomes even more significant for larger memory sizes on CIFAR-100 up to $8.05\%$ with $M=5k$, which exhibits better scaling with memory size than compared methods. Moreover, our method outperforms every considered baselines except OCM on CIFAR-10. However, experiments on \textit{blurry} variants show evidence that OCM performances highly rely on task boundaries.

% \vskip -0.1in
\paragraph{Blurry Boundaries} As shown in Table \ref{tab:avg_acc_blurry}, our method outperforms every other method in this scenario. Notably, OCM, which requires precise task change for distillation, suffers from a consequent drop in performance in the blurry scenario while our approach gains performance instead. This demonstrates that our method is more suited to realistic scenarios than current state-of-the-art approaches.

\begin{table*}[!hbt]
    \setlength{\tabcolsep}{4pt}
    \centering
    \resizebox{0.95\textwidth}{!}{\begin{tabular}{l|ll|lll|lll}
    \hline
      \multicolumn{1}{}{} &     \multicolumn{2}{c}{CIFAR10}           &    \multicolumn{3}{c}{CIFAR100}  &             \multicolumn{3}{c}{Tiny ImageNet}     \\
       \hline
    \multicolumn{1}{c}{Method} &         \multicolumn{1}{c}{M=500} &         \multicolumn{1}{c}{M=1k} &        \multicolumn{1}{c}{M=1k} &        \multicolumn{1}{c}{M=2k} &        \multicolumn{1}{c}{M=5k} &        \multicolumn{1}{c}{M=2k} &        \multicolumn{1}{c}{M=5k} &       \multicolumn{1}{c}{M=10k} \\
    \hline\hline
AGEM     &  12.62{\scriptsize ±1.92} &  12.28{\scriptsize ±2.44} &   2.36{\scriptsize ±0.33} &   2.51{\scriptsize ±0.27} &   2.42{\scriptsize ±0.33} &   1.15{\scriptsize ±0.24} &    1.18{\scriptsize ±0.25} &  1.16{\scriptsize ±0.3} \\
DER++    &  49.49{\scriptsize ±5.18} &  55.17{\scriptsize ±4.15} &  26.25{\scriptsize ±2.08} &  28.52{\scriptsize ±10.05} & 33.48{\scriptsize ±4.75} &  11.07{\scriptsize ±2.06} &   18.47{\scriptsize ±2.84} & 22.88{\scriptsize ±4.38} \\
DVC      &  58.26{\scriptsize ±2.29} &  62.38{\scriptsize ±2.89} &   24.5{\scriptsize ±2.02} &   26.3{\scriptsize ±5.74} &  33.16{\scriptsize ±2.57} &   11.6{\scriptsize ±2.02} &   17.78{\scriptsize ±2.5} &  18.16{\scriptsize ±4.05} \\
GDUMB    &  34.06{\scriptsize ±1.81} &  41.42{\scriptsize ±1.25} &  11.43{\scriptsize ±0.69} &  15.74{\scriptsize ±0.61} &  25.53{\scriptsize ±0.44} &   7.08{\scriptsize ±0.39} &  13.79{\scriptsize ±0.76} &  22.35{\scriptsize ±0.23} \\
ER       &  54.55{\scriptsize ±2.04} &   61.7{\scriptsize ±2.41} &  23.68{\scriptsize ±0.95} &  29.84{\scriptsize ±1.83} &  36.27{\scriptsize ±1.52} &  11.33{\scriptsize ±2.03} &  19.14{\scriptsize ±1.46} &  24.51{\scriptsize ±1.63} \\
ER-ACE  &   59.93{\scriptsize ±3.12} &   65.3{\scriptsize ±1.52} &   29.54{\scriptsize ±1.0} &  34.73{\scriptsize ±0.71} &  41.16{\scriptsize ±1.57} &  20.85{\scriptsize ±0.85} &  26.79{\scriptsize ±0.97} &  31.6{\scriptsize ±1.01} \\
SCR      &  61.27{\scriptsize ±1.34} &  68.31{\scriptsize ±1.61} &  30.81{\scriptsize ±0.54} &   36.42{\scriptsize ±0.42} &  40.19{\scriptsize ±0.57} &  19.39{\scriptsize ±0.55} &  23.08{\scriptsize ±0.58} &  24.26{\scriptsize ±0.63} \\
OCM &   47.4{\scriptsize ±3.11} &  51.98{\scriptsize ±6.03} &  26.81{\scriptsize ±1.54} & 34.77{\scriptsize ±0.82} &  40.34{\scriptsize ±1.47} &  18.11{\scriptsize ±0.85} &   25.37{\scriptsize ±1.0} &  29.88{\scriptsize ±0.67}  \\
\hline
vMF-FD &  \underline{63.87{\scriptsize ±1.72}} &  \underline{71.04{\scriptsize ±1.29}} &  \underline{34.68{\scriptsize ±0.76}} &    \underline{42.0{\scriptsize ±0.68}} &   \underline{50.71{\scriptsize ±0.60}} &  \underline{21.71{\scriptsize ±0.54}} &  \underline{30.21{\scriptsize ±0.51}} &  \underline{35.16{\scriptsize ±0.49}} \\
AGD-FD & \textbf{64.38{\scriptsize ±2.0}} &  \textbf{71.59{\scriptsize ±0.99} }&  \textbf{36.32{\scriptsize ±0.68}} &  \textbf{43.51{\scriptsize ±0.35}} &  \textbf{50.84{\scriptsize ±0.72}} &  \textbf{23.57{\scriptsize ±0.4}} &  \textbf{31.75{\scriptsize ±0.42}} &  \textbf{35.96{\scriptsize ±0.47}} \\
\hline\hline
    \end{tabular}}
    \caption{Final average accuracy (\%) for all methods on \textit{blurry} variants with CIFAR10 split into 5 tasks, CIFAR100 split into 10 tasks and TinyIN split into 100 tasks for varying memory sizes $M$. Dataset boundaries are blurred with a scale $\sigma=1500$. Results are computed over 10 runs and the means and standard deviations are displayed. Best results are in bold. Second are underlined.}
    \label{tab:avg_acc_blurry}
    \vspace{-0.3cm}
\end{table*}

\section{Model Analysis}
\label{sec:model_analysis}
In subsequent we study the impact of several hyper-parameters for our method. We also apply some of our methods components such as multi-view batch and guillotine regularization to other methods for a fair comparison.

\vskip -0.05in
\paragraph{Impact of not fixing $\mu_c$}
In section \ref{sec:mu_estimation}, we discussed the choice of $\boldsymbol{\mu}_c$, the mean for class $c$. In the following we compare the effect of fixing $\boldsymbol{\mu}_c$ as   (i) $\boldsymbol{\mu}_c=\textbf{e}_c$ where $\textbf{e}_c$ is the c-th vector of the standard basis; 
% (ii) $\mu_c = \hat{\mu_c}$ where $\hat{\mu_c} = \frac{1}{|\mathcal{B}_c|}\sum_{x\in \mathcal{B}_c}\Phi_\theta(x)$ with $\mathcal{B}_c = \{x\in \mathcal{B} | y=c\}$; 
to its estimation as (ii) $\boldsymbol{\mu}_c = \hat{\boldsymbol{\mu}_c} / ||\hat{\boldsymbol{\mu}_c}||$, the \textit{spherical mean}, where $\hat{\mu_c}$ is the arithmetic mean of class $c$ computed with current batch representations. Given that the mean estimation depends on the batch size, we also explore the effect of increasing the number of images retrieved from memory, denoted as $|X_{\mathcal{M}}|$, to facilitate more accurate estimation for larger batch sizes. Results in Table \ref{tab:fixed_vs_mov} demonstrate that fixing the mean values during training leads to a substantial improvement in the overall accuracy, even when using larger batch sizes.

\begin{table*}[!bht]
 % \vskip -0.1in
  \centering
  \setlength{\tabcolsep}{4pt}
  \begin{tabular}{c|cccccc}
    $\mu_c$ $\backslash$ $|X_{\mathcal{M}}|$ &  16  &        32  &         64  &         128 &         256 &         512 \\
    \hline
    % $\hat{\mu_c}$ & 9.32±1.08 &  9.55±0.38 &  14.78±1.42 &  20.88±1.04 &   24.18±1.8 &  25.56±1.43  \\
    $\hat{\mu_c} / ||\hat{\mu_c}||$ & 16.42±0.25 &  19.23±0.5 &  25.54±1.48 &  35.82±0.65 &  44.54±0.82 &  45.79±0.79 \\
    $e_c$     & \textbf{36.91±1.08} &  \textbf{45.42±0.7} &  \textbf{50.36±0.58} &  \textbf{50.98±0.38} &  \textbf{49.98±0.72} &  \textbf{49.62±0.49} \\
  \end{tabular}
    \vskip -0.05in
   \caption{Final AA on CIFAR100 with 10 tasks and M=5k for fixed mean $\boldsymbol{\mu}_c=\textbf{e}_c$ and spherical mean estimates $\boldsymbol{\mu}_c = \hat{\boldsymbol{\mu}_c} / ||\hat{\boldsymbol{\mu}_c}||$,  and various values of $|X_{\mathcal{M}}|$, the number images retrieved from memory. %$\mu_c=e_c$ corresponds to fixing means to standard basis, $\hat{\mu_c}$ is the arithmetic mean computed over current batch. 
  Means and standard deviations from 5 runs are showed.}
    \label{tab:fixed_vs_mov}
\end{table*}

%\vspace{0.1cm} 
\noindent\begin{minipage}{.477\textwidth}
\paragraph{Impact of concentration parameter $\kappa$} Another hyper-parameter to choose for our method is the concentration parameter $\kappa$. On the one hand, the concentration must be low enough for the problem to be feasible but on the other hand, when the concentration falls below a certain value, Gaussians will overlap, which can lead to lower classification accuracy. To illustrate this effect, Figure \ref{fig:kappa} depicts the impact of different values of $\kappa$ on the classification accuracy, where optimal values occur at $\kappa^2=7$ for vMF and $\kappa^2=0.2$ for AGD.
\end{minipage} \hfill
\begin{minipage}{.45\textwidth}
% \vskip -0.1in
\begin{figure}[H]
    \centering
    \includegraphics[width=0.8\textwidth]{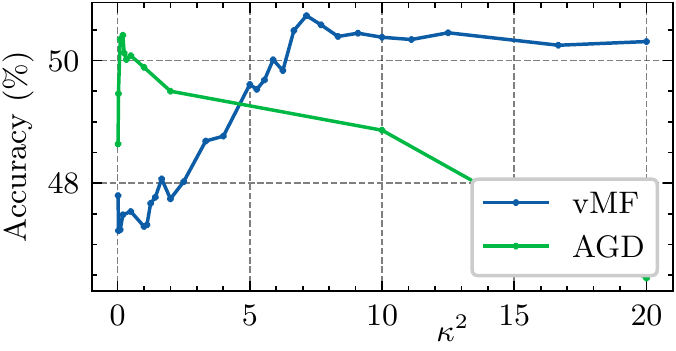}
    \vskip -0.15in
    \caption{Final average accuracy (\%) for $\kappa^2 \in [0.02,20]$ on CIFAR-100 with $M=5k$.}
    \label{fig:kappa}
\end{figure}
\end{minipage}

\begin{table*}[!bht]
 \vskip -0.1in
  \setlength{\tabcolsep}{4pt}
  \centering
  \begin{tabular}{c|cccccccc}
    $n$  & 3 & 4 & 5 & 6 & 7 & 8 & 9 & 10 \\
    \hline
    SCR & 43.9±0.6 &  44.4±0.4 &  44.9±1.0 &  45.5±0.4 &  45.3±0.6 &  45.5±0.8 &  46.5±0.6 &  46.3±0.5 \\
    ER & 42.6±1.0 & 45.0±2.0 & 44.2±1.5 & 44.6±1.1 & 44.3±1.2 & 44.1±1.9 & 43.7±1.0 & 44.3±1.7 \\
    ER-ACE & 42.2±0.9 & 42.1±1.4 & 42.8±0.7 & 42.7±0.8 & 41.8±1.7 & 41.1±1.8 & 40.9±1.3 & - \\
    AGD-FD  & 49.4±0.4 &  \textbf{50.0±0.4} &  50.1±0.7 &  \textbf{51.1±0.4 }&  \textbf{51.1±0.3} &  \textbf{50.9±0.6} &  \textbf{51.7±0.3} &  \textbf{51.2±0.4}  \\
    vMF-FD  & \textbf{49.5±0.6} &  50.0±0.7 &  \textbf{50.7±0.5} &  50.9±0.7 &  50.9±0.4 &  50.4±0.7 &  50.9±0.3 &  50.8±0.3 \\
  \end{tabular}
    \vskip -0.05in
  \caption{Accuracy on CIFAR100 with 10 tasks, $M=5k$ for SCR, ER, ER-ACE, vMF-FD, AGD-FD and the number of views $n \in [3, 10]$. Means and standard deviations over 5 runs are displayed.}
\label{tab:n_augs}
\end{table*} 

\paragraph{Impact of the number of views $n$}
\label{sec:nviews_impact}
Due to the online setting, increasing the number of views has a notable impact on the performances, as shown in Table \ref{tab:n_augs}. To ensure reasonable training time, we use $n=5$ in our experiments, but increasing $n$ could be considered for better performances. To discern the individual impacts of the multi-augmentation and the proposed training loss, we apply the former to SCR and record the resulting performances in Table \ref{tab:n_augs}. Notably, using a multi-view batch also boosts SCR performance, but our method still outperforms SCR even with identical numbers of augmentations. This observation reinforces the efficacy of the proposed loss function.

%\vspace{-0.7cm}
\section{Conclusion}
\label{sec:conclusion}
%\textbf{From chat GPT.} 
%In this paper, we proposed a new loss based on maximum a posteriori estimation with a Gaussian hypothesis, adapted to Continual Learning. We demonstrated through extensive experiments that our method outperforms state-of-the-art methods on standard evaluation scenarios and more realistic datasets with blurry task boundaries. Our approach is computationally efficient and does not require large batch sizes or negative data. As future work, it would be interesting to explore the potential of our method in other domains such as natural language processing, robotics, and reinforcement learning. In summary, the proposed approach offers a promising solution for online continual learning and has the potential to achieve even greater results as it is adapted and refined for different use cases.

This paper proposes a new approach to deal with image classification, adapted to Continual Learning. This approach provides a framework that allows to outperform current state-of-the-art methods. Our learning strategy is to search for the neural network's representations that maximize the a posteriori probability density, a known and consistent approach.

The obtained performance likely results from the choice of data distribution. Since the data used are projected onto the hypersphere, a standard practice in representation learning, we consider distributions on the sphere: the von Mises Fisher distribution and the angular Gaussian distribution. The angular Gaussian distribution, which effectively corresponds to the projection of Gaussian data on the sphere, is the one that shows the best performance, probably because it is the model that most closely approximates the nature of the data. A third ingredient that explains the performance is the use of fixed mean directions, corresponding to each of the classes. This both simplifies the implementation and makes the method robust to data-drift. It is particularly noticeable in the case of blurry boundaries between tasks, where our approach further widens the gap with other approaches. %, especially those that rely on distillation. 
Finally, the results are based on careful implementations and a relevant choice of hyperparameters. Comparisons have been obtained both on standard evaluation scenarios and on more realistic datasets with blurry task boundaries. All methods were carefully re-implemented, often performing better than advertised in the original papers. Beyond pure performance, our approach is computationally efficient and does not require large batch sizes or negative data.

Here we used distributions with an i.i.d.\ assumption on the components. It is possible that the use of a non-diagonal covariance matrix would further improve the results. 
Moreover, other probability distributions, such as a multivariate Student-t on the sphere, can be considered. Finally, the application of our new losses to non-online batch learning should also be considered.

\section*{Acknowledgments}
This work has received support from Agence Nationale de la Recherche (ANR) for the project APY, with reference ANR-20-CE38-0011-02 and was granted access to the HPC resources of IDRIS under the allocation 2022-AD011012603 made by GENCI.

\bibliographystyle{IEEEbib}
\bibliography{refs}

\begin{thebibliography}{10}

\bibitem{guo_ocm_2022}
Yiduo Guo, Bing Liu, and Dongyan Zhao,
\newblock ``Online {Continual} {Learning} through {Mutual} {Information}
  {Maximization},''
\newblock in {\em Proceedings of the 39th {International} {Conference} on
  {Machine} {Learning}}, June 2022, pp. 8109--8126.

\bibitem{aljundi_online_2019}
Rahaf Aljundi, Eugene Belilovsky, Tinne Tuytelaars, Laurent Charlin, Massimo
  Caccia, Min Lin, and Lucas Page-Caccia,
\newblock ``Online {Continual} {Learning} with {Maximal} {Interfered}
  {Retrieval},''
\newblock in {\em Advances in {Neural} {Information} {Processing} {Systems}},
  2019, vol.~32.

\bibitem{he_online_2021}
Jiangpeng He and Fengqing Zhu,
\newblock ``Online continual learning via candidates voting,''
\newblock in {\em Proceedings of the IEEE/CVF Winter Conference on Applications
  of Computer Vision}, 2022, pp. 3154--3163.

\bibitem{michel_contrastive_2022}
Nicolas Michel, Romain Negrel, Giovanni Chierchia, and Jean-Fmn{\c{c}}ois
  Bercher,
\newblock ``Contrastive learning for online semi-supervised general continual
  learning,''
\newblock in {\em 2022 IEEE International Conference on Image Processing
  (ICIP)}. IEEE, 2022, pp. 1896--1900.

\bibitem{gu_dvc_2022}
Yanan Gu, Xu~Yang, Kun Wei, and Cheng Deng,
\newblock ``Not {Just} {Selection}, but {Exploration}: {Online}
  {Class}-{Incremental} {Continual} {Learning} via {Dual} {View}
  {Consistency},''
\newblock in {\em 2022 {IEEE}/{CVF} {Conference} on {Computer} {Vision} and
  {Pattern} {Recognition} ({CVPR})}, June 2022, pp. 7432--7441.

\bibitem{vedaldi_gdumb_2020}
Ameya Prabhu, Philip~HS Torr, and Puneet~K Dokania,
\newblock ``Gdumb: A simple approach that questions our progress in continual
  learning,''
\newblock in {\em Computer Vision--ECCV 2020: 16th European Conference,
  Proceedings, Part II 16}, 2020, pp. 524--540.

\bibitem{rolnick_experience_2019}
David Rolnick, Arun Ahuja, Jonathan Schwarz, Timothy Lillicrap, and Gregory
  Wayne,
\newblock ``Experience {Replay} for {Continual} {Learning},''
\newblock in {\em Advances in {Neural} {Information} {Processing} {Systems}},
  2019, vol.~32.

\bibitem{mai_online_2021}
Zheda Mai, Ruiwen Li, Jihwan Jeong, David Quispe, Hyunwoo Kim, and Scott
  Sanner,
\newblock ``Online continual learning in image classification: An empirical
  survey,''
\newblock {\em Neurocomputing}, vol. 469, pp. 28--51, 2022.

\bibitem{mai_supervised_2021}
Zheda Mai, Ruiwen Li, Hyunwoo Kim, and Scott Sanner,
\newblock ``Supervised contrastive replay: Revisiting the nearest class mean
  classifier in online class-incremental continual learning,''
\newblock in {\em Proceedings of the IEEE/CVF Conference on Computer Vision and
  Pattern Recognition}, 2021, pp. 3589--3599.

\bibitem{lin_pcr_2023}
Huiwei Lin, Baoquan Zhang, Shanshan Feng, Xutao Li, and Yunming Ye,
\newblock ``Pcr: Proxy-based contrastive replay for online class-incremental
  continual learning,''
\newblock in {\em Proceedings of the IEEE/CVF Conference on Computer Vision and
  Pattern Recognition}, 2023, pp. 24246--24255.

\bibitem{hsu_re-evaluating_2019}
Yen-Chang Hsu, Yen-Cheng Liu, Anita Ramasamy, and Zsolt Kira,
\newblock ``Re-evaluating continual learning scenarios: A categorization and
  case for strong baselines,''
\newblock {\em arXiv preprint arXiv:1810.12488}, 2018.

\bibitem{buzzega_dark_2020}
Pietro Buzzega, Matteo Boschini, Angelo Porrello, Davide Abati, and Simone
  Calderara,
\newblock ``Dark experience for general continual learning: a strong, simple
  baseline,''
\newblock in {\em Advances in Neural Information Processing Systems}, 2020,
  vol.~33, pp. 15920--15930.

\bibitem{fini_cassle_2022}
Enrico Fini, Victor G.~Turrisi Da~Costa, Xavier Alameda-Pineda, Elisa Ricci,
  Karteek Alahari, and Julien Mairal,
\newblock ``Self-{Supervised} {Models} are {Continual} {Learners},''
\newblock in {\em 2022 {IEEE}/{CVF} {Conference} on {Computer} {Vision} and
  {Pattern} {Recognition} ({CVPR})}, New Orleans, LA, USA, June 2022, pp.
  9611--9620, IEEE.

\bibitem{madaan_lump_2022}
Divyam Madaan, Jaehong Yoon, Yuanchun Li, Yunxin Liu, and Sung~Ju Hwang,
\newblock ``Representational {Continuity} for {Unsupervised} {Continual}
  {Learning},'' Apr. 2022.

\bibitem{davari_probing_2022}
MohammadReza Davari, Nader Asadi, Sudhir Mudur, Rahaf Aljundi, and Eugene
  Belilovsky,
\newblock ``Probing representation forgetting in supervised and unsupervised
  continual learning,''
\newblock in {\em Proceedings of the IEEE/CVF Conference on Computer Vision and
  Pattern Recognition}, 2022, pp. 16712--16721.

\bibitem{chen_simple_2020}
Ting Chen, Simon Kornblith, Mohammad Norouzi, and Geoffrey Hinton,
\newblock ``A simple framework for contrastive learning of visual
  representations,''
\newblock vol. 119, pp. 1597--1607, 13--18 Jul 2020.

\bibitem{grill_byol_2020}
Jean-Bastien Grill, Florian Strub, Florent Altch{\'e}, Corentin Tallec, Pierre
  Richemond, Elena Buchatskaya, Carl Doersch, Bernardo Avila~Pires, Zhaohan
  Guo, Mohammad Gheshlaghi~Azar, et~al.,
\newblock ``Bootstrap your own latent-a new approach to self-supervised
  learning,''
\newblock {\em Advances in neural information processing systems}, vol. 33, pp.
  21271--21284, 2020.

\bibitem{chen_simsiam_2020}
Xinlei Chen and Kaiming He,
\newblock ``Exploring simple siamese representation learning,''
\newblock in {\em Proceedings of the IEEE/CVF conference on computer vision and
  pattern recognition}, 2021, pp. 15750--15758.

\bibitem{wang_understanding_2022}
Tongzhou Wang and Phillip Isola,
\newblock ``Understanding {Contrastive} {Representation} {Learning} through
  {Alignment} and {Uniformity} on the {Hypersphere},''
\newblock in {\em Proceedings of the 37 th International Conference on Machine
  Learning}. Aug. 2020, PMLR, 119.

\bibitem{zbontar_barlow_2021}
Jure Zbontar, Li~Jing, Ishan Misra, Yann LeCun, and Stéphane Deny,
\newblock ``Barlow {Twins}: {Self}-{Supervised} {Learning} via {Redundancy}
  {Reduction},''
\newblock in {\em Proceedings of the 38th International Conference on Machine
  Learning, ICML 2021}, June 2021, pp. 12310--12320.

\bibitem{cha_co2l_2021}
Hyuntak Cha, Jaeho Lee, and Jinwoo Shin,
\newblock ``Co2l: Contrastive continual learning,''
\newblock {\em Proceedings of the IEEE/CVF International Conference on Computer
  Vision}, pp. 9516--9525, 2021.

\bibitem{caccia_new_2022}
Lucas Caccia, Rahaf Aljundi, Nader Asadi, Tinne Tuytelaars, Joelle Pineau, and
  Eugene Belilovsky,
\newblock ``New insights on reducing abrupt representation change in online
  continual learning,''
\newblock in {\em International Conference on Learning Representations}, 2022.

\bibitem{vitter_random_1985}
Jeffrey~S. Vitter,
\newblock ``Random sampling with a reservoir,''
\newblock {\em ACM Transactions on Mathematical Software}, vol. 11, no. 1, pp.
  37--57, Mar. 1985.

\bibitem{lopez-paz_gradient_2017}
David Lopez-Paz and Marc'Aurelio Ranzato,
\newblock ``Gradient episodic memory for continual learning,''
\newblock {\em Advances in neural information processing systems}, vol. 30,
  2017.

\bibitem{khosla_supervised_2020}
Prannay Khosla, Piotr Teterwak, Chen Wang, Aaron Sarna, Yonglong Tian, Phillip
  Isola, Aaron Maschinot, Ce~Liu, and Dilip Krishnan,
\newblock ``Supervised contrastive learning,''
\newblock {\em Advances in Neural Information Processing Systems}, vol. 33, pp.
  18661--18673, 2020.

\bibitem{oord_representation_2019}
Aaron van~den Oord, Yazhe Li, and Oriol Vinyals,
\newblock ``Representation {Learning} with {Contrastive} {Predictive}
  {Coding},''
\newblock {\em arXiv:1807.03748 [cs, stat]}, Jan. 2019.

\bibitem{pernici_incremental_2021}
Federico Pernici, Matteo Bruni, Claudio Baecchi, Francesco Turchini, and
  Alberto Del~Bimbo,
\newblock ``Class-incremental learning with pre-allocated fixed classifiers,''
\newblock in {\em 2020 25th International Conference on Pattern Recognition
  (ICPR)}. IEEE, 2021, pp. 6259--6266.

\bibitem{bojanowski2017unsupervised}
Piotr Bojanowski and Armand Joulin,
\newblock ``Unsupervised learning by predicting noise,''
\newblock in {\em International Conference on Machine Learning}. PMLR, 2017,
  pp. 517--526.

\bibitem{hasnat_von_2017}
Md~Abul Hasnat, Julien Bohné, Jonathan Milgram, Stéphane Gentric, and Liming
  Chen,
\newblock ``von {Mises}-{Fisher} {Mixture} {Model}-based {Deep} learning:
  {Application} to {Face} {Verification},'' Dec. 2017,
\newblock arXiv:1706.04264 [cs].

\bibitem{mettes2019hyperspherical}
Pascal Mettes, Elise Van~der Pol, and Cees Snoek,
\newblock ``Hyperspherical prototype networks,''
\newblock {\em Advances in neural information processing systems}, vol. 32,
  2019.

\bibitem{Saw78}
John~G. Saw,
\newblock ``A family of distributions on the m-sphere and some hypothesis
  tests,''
\newblock {\em Biometrika}, vol. 65, no. 1, pp. 69--73, 1978.

\bibitem{asao2022convergence}
Yasuhiko Asao, Ryotaro Sakamoto, and Shiro Takagi,
\newblock ``Convergence of neural networks to gaussian mixture distribution,''
\newblock {\em {arXiv}:2204.12100}, 2022.

\bibitem{bordes_guillotine_2022}
Florian Bordes, Randall Balestriero, Quentin Garrido, Adrien Bardes, and Pascal
  Vincent,
\newblock ``Guillotine regularization: Why removing layers is needed to improve
  generalization in self-supervised learning,''
\newblock {\em Transactions on Machine Learning Research}, 2023.

\bibitem{yoon_scalable_2020}
Jaehong Yoon, Saehoon Kim, Eunho Yang, and Sung~Ju Hwang,
\newblock ``Scalable and {Order}-robust {Continual} {Learning} with {Additive}
  {Parameter} {Decomposition},''
\newblock in {\em International Conference on Learning Representation (ICLR)},
  Feb. 2020.

\bibitem{krizhevsky_learning_2009}
Alex Krizhevsky et~al.,
\newblock ``Learning multiple layers of features from tiny images,''
\newblock {\em University of Toronto}, 2009.

\bibitem{le_tiny_2015}
Ya~Le and Xuan Yang,
\newblock ``Tiny imagenet visual recognition challenge,''
\newblock {\em CS 231N}, vol. 7, no. 7, pp. 3, 2015.

\bibitem{michel2023learning}
Nicolas Michel, Giovanni Chierchia, Romain Negrel, and Jean-Fran{\c{c}}ois
  Bercher,
\newblock ``Learning representations on the unit sphere: Application to online
  continual learning,''
\newblock {\em arXiv preprint arXiv:2306.03364}, 2023.

\bibitem{chaudhry_efficient_2019}
Arslan Chaudhry, Marc'Aurelio Ranzato, Marcus Rohrbach, and Mohamed Elhoseiny,
\newblock ``Efficient {Lifelong} {Learning} with {A}-{GEM},''
\newblock {\em arXiv:1812.00420 [cs, stat]}, Jan. 2019.

\bibitem{kirkpatrick_overcoming_2017}
James Kirkpatrick, Razvan Pascanu, Neil Rabinowitz, Joel Veness, Guillaume
  Desjardins, Andrei~A Rusu, Kieran Milan, John Quan, Tiago Ramalho, Agnieszka
  Grabska-Barwinska, et~al.,
\newblock ``Overcoming catastrophic forgetting in neural networks,''
\newblock {\em Proceedings of the national academy of sciences}, vol. 114, no.
  13, pp. 3521--3526, 2017.

\bibitem{mardia2009}
P.E. Jupp and K.V. Mardia,
\newblock {\em Directional Statistics},
\newblock Wiley Series in Probability and Statistics. Wiley, 2009.

\bibitem{PUKKILA1988379}
Tarmo~M. Pukkila and C.~{Radhakrishna Rao},
\newblock ``Pattern recognition based on scale invariant discriminant
  functions,''
\newblock {\em Information Sciences}, vol. 45, no. 3, pp. 379--389, 1988.

\bibitem{paine2018}
P.~J. Paine, S.~P. Preston, M.~Tsagris, and Andrew T.~A. Wood,
\newblock ``An elliptically symmetric angular gaussian distribution,''
\newblock {\em Statistics and Computing}, vol. 28, no. 3, pp. 689--697, 05
  2018.

\bibitem{Saw1973}
John~G. Saw,
\newblock ``Jacobians of singular transformations with applications to
  statistical distribution theory,''
\newblock {\em Communications in Statistics}, vol. 1, no. 1, pp. 81--91, 1973.

\bibitem{Gradshteyn2014}
Daniel Zwillinger, Victor Moll, I.S. Gradshteyn, and I.M. Ryzhik, Eds.,
\newblock {\em Table of Integrals, Series, and Products (Eighth Edition)},
\newblock Academic Press, Boston, eighth edition edition, 2014.

\bibitem{Krizhevsky2009LearningML}
Alex Krizhevsky,
\newblock ``Learning multiple layers of features from tiny images,''
\newblock {\em University of Toronto}, 05 2012.

\bibitem{lomonaco2017core50}
Vincenzo Lomonaco and Davide Maltoni,
\newblock ``Core50: a new dataset and benchmark for continuous object
  recognition,''
\newblock in {\em Conference on robot learning}. PMLR, 2017, pp. 17--26.

\end{thebibliography}

%%%%%%%%%%%%%%%%%%%%%%%%%%%%%%%%%%%%%%%%%%%%%%%%%%%%%%%%%%%%

% \appendix

% \section{Appendix}
\begin{appendices}
\section{Projected-normal or Angular Gaussian Distribution \label{sec:appendixAGD}}

Let $x$ be a random vector of $\mathbb{R}^d$ with a Gaussian distribution of mean $\mu$ and covariance 
matrix $\Sigma$:
\begin{equation}
    f_X(x) = \frac{1}{(2\pi)^\frac{d}{2} |\Sigma|^\frac{1}{2}} \exp{\left(-\frac{1}{2} (x-\mu)^T\Sigma^{-1} (x-\mu)\right)}
\end{equation}
and define by 
\begin{equation}
    u = \frac{x}{||x||} = \frac{x}{(x^Tx)^{\frac{1}{2}}} = \frac{x}{r}
\end{equation}
the projected vector onto the unit sphere $S_{d-1} = \{y \in \mathbb{R}^d: y^Ty = 1 \}$. The marginal of $x$ on $S_{d-1}$ is called \textit{projected-normal} in \cite{mardia2009} or \textit{angular Gaussian} in \cite{PUKKILA1988379}. It seems to be little known and utilized \cite{paine2018}, especially in the case $d>3$. 

We give here several expressions for the density $f_U(u)$ of the normalized vector, recalling the result of \cite{PUKKILA1988379} in terms of a recursively computable integral, proving a result which has been stated in \cite{Saw78} without direct proof, and extending it to the general case. Finally, we provide a closed-form expression in terms of a special function. Let $r = {(x^Tx)^{\frac{1}{2}}}$. The Jacobian of the transformation $x \rightarrow (r, u)$ is $r^{d-1}$ \cite{Saw1973}, so that the density of $(r,u)$ with respect to the surface element $\mathrm{d}\omega_{d-1}$ on the unit sphere, is given by
\begin{align}
    f_{R,U}(r,u) & = \frac{r^{d-1}}{(2\pi)^\frac{d}{2} |\Sigma|^\frac{1}{2}} \exp{\left( -\frac{1}{2}(ru-\mu)^T\Sigma^{-1} (ru-\mu)\right)} \\
    & = \frac{r^{d-1}}{(2\pi)^\frac{d}{2} |\Sigma|} \exp{\left( -\frac{1}{2} \mu^T\Sigma^{-1} \mu\right)} \exp{\left( -\frac{1}{2} r^2 u^T\Sigma^{-1} u + ru^T\Sigma^{-1}\mu \right)}.
\end{align}
The density for $f_U(u)$ is obtained by marginalizing $f_{R,U}(r,u)$ over $r$: $f_U(u) = \int_0^\infty f_{R,U}(r,u) \mathrm{d}r$. Let $r' = r (u^T\Sigma^{-1} u)^\frac{1}{2}$; then
\begin{equation}
    f_U(u) = \frac{(u^T\Sigma^{-1} u)^{-\frac{d}{2}}}{(2\pi)^\frac{d}{2} |\Sigma|^\frac{1}{2}} 
    \exp{\left( -\frac{1}{2} \mu^T\Sigma^{-1} \mu\right)} 
    \int_0^\infty r'^{d-1} \exp{\left( -\frac{1}{2} r'^2 + r' \frac{u^T\Sigma^{-1}\mu}{u^T\Sigma^{-1} u} \right)} \mathrm{d}r'
    \label{eq:fU1}
\end{equation}
Denoting $\lambda = (\mu^T\Sigma^{-1} \mu)^{\frac{1}{2}}$, $\bar{u} = \frac{u}{(u^T\Sigma^{-1} u)^{\frac{1}{2}}}$ and $\bar{\mu} = \frac{\mu}{(\mu^T\Sigma^{-1} \mu)^{\frac{1}{2}}}$, \eqref{eq:fU1} becomes
\begin{equation}
    f_U(u) = \frac{(u^T\Sigma^{-1} u)^{-\frac{d}{2}}}{(2\pi)^\frac{d}{2} |\Sigma|^\frac{1}{2}} 
    \exp{\left( -\frac{1}{2} \lambda^2 \right)} 
    \int_0^\infty r'^{d-1} \exp{\left( -\frac{1}{2} r'^2 + \lambda r' ~ \bar{u}^T\Sigma^{-1}\bar{\mu} \right)} \mathrm{d}r'
    \label{eq:fU2}
\end{equation}
\textbf{Remark}: With $\mu=0$ and $\Sigma = \sigma^2 1$, which means that $x$ is distributed as a centered isotropic Gaussian,  \eqref{eq:fU2} reduces to
\begin{equation}
    f_U(u) = \frac{1}{(2\pi)^\frac{d}{2} } 
    \int_0^\infty r'^{d-1} \exp{\left( -\frac{1}{2} r'^2 \right)} \mathrm{d}r'\\
     = \frac{\Gamma\left(\frac{d}{2}\right)}{2 \pi^\frac{d}{2} } = \frac{1}{\omega_{d-1}}
     \label{eq:unifsphere}
\end{equation}
where we used $u^Tu=1$ and the known property 
\begin{equation}
\int_0^\infty r^{d-1} \exp{\left( -\frac{1}{2} r^2 \right)} \mathrm{d}r = 2^{\frac{d}{2}-1}\Gamma\left(\frac{d}{2}\right).
\label{eq:gammaId}
\end{equation}
Equation \eqref{eq:unifsphere} shows that $f_U(u)$ is the uniform distribution on the unit-sphere, where $\omega_{d-1}$ is the surface of the unit-sphere. 

Starting with \eqref{eq:fU1}, we can now state the first result, which is due to \cite{PUKKILA1988379}.  

\begin{prop}
 With  $\lambda = (\mu^T\Sigma^{-1} \mu)^{\frac{1}{2}}$ and $\alpha = \frac{u^T\Sigma^{-1}\mu}{u^T\Sigma^{-1} u}$, the probability density of the normalized Gaussian vector is
\begin{equation}
       f_U(u) = \frac{(u^T\Sigma^{-1} u)^{-\frac{d}{2}}}{(2\pi)^{\frac{d}{2}-1} |\Sigma|^\frac{1}{2}} 
    \exp{\left( -\frac{1}{2} \left(\lambda^2-\alpha^2\right) \right)} 
    I_d(\alpha) 
    \label{eq:fU_Rao}
\end{equation}
with
\begin{equation}
I_d(\alpha) = \frac{1}{\sqrt{2\pi}} \int_0^\infty r^{d-1} \exp{\left( -\frac{1}{2} (r-\alpha)^2 \right)} \mathrm{d}r    
\label{eq:Id_def}
\end{equation}
and can be computed as 
$$I_d(\alpha) =  \alpha I_{d-1}(\alpha) + (d-2)I_{d-2}(\alpha),$$ 
with $I_1 = \Phi(\alpha)$ and $I_2 = \phi(\alpha) + \alpha \Phi(\alpha)$, where $\phi(.)$ and $\Phi(.)$ are respectively the standard normal probability density function and cumulative distribution function.
\end{prop}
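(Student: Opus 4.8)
The plan is to start from Equation \eqref{eq:fU1}, which already presents $f_U(u)$ as a scalar prefactor times a single radial integral, and then to reshape that integral into the normalized form \eqref{eq:Id_def}. First I would substitute the shorthands $\lambda^2 = \mu^T\Sigma^{-1}\mu$ and $\alpha = \frac{u^T\Sigma^{-1}\mu}{u^T\Sigma^{-1}u}$ directly into \eqref{eq:fU1}, so that the integrand becomes $r'^{d-1}\exp(-\tfrac12 r'^2 + \alpha r')$ while the $\exp(-\tfrac12\lambda^2)$ factor stays in the prefactor.

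The key algebraic step is to complete the square in the radial exponent, $-\tfrac12 r'^2 + \alpha r' = -\tfrac12 (r'-\alpha)^2 + \tfrac12\alpha^2$. Pulling $\exp(\tfrac12\alpha^2)$ outside the integral and merging it with $\exp(-\tfrac12\lambda^2)$ produces exactly the $\exp\!\big(-\tfrac12(\lambda^2-\alpha^2)\big)$ term in \eqref{eq:fU_Rao}. What remains inside is $\int_0^\infty r'^{d-1}\exp(-\tfrac12(r'-\alpha)^2)\,\mathrm{d}r' = \sqrt{2\pi}\,I_d(\alpha)$ by the definition \eqref{eq:Id_def}. Collecting this $\sqrt{2\pi}$ against the $(2\pi)^{-d/2}$ prefactor fixes the power of $2\pi$ in \eqref{eq:fU_Rao}, and the $(u^T\Sigma^{-1}u)^{-d/2}$ and $|\Sigma|^{-1/2}$ factors carry through untouched.

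For the recursion I would work solely with \eqref{eq:Id_def}. Writing $r = (r-\alpha) + \alpha$ splits $r^{d-1} = r^{d-2}(r-\alpha) + \alpha\, r^{d-2}$; the second piece integrates immediately to $\alpha I_{d-1}(\alpha)$. For the first piece I would use $(r-\alpha)\exp(-\tfrac12(r-\alpha)^2) = -\frac{\mathrm{d}}{\mathrm{d}r}\exp(-\tfrac12(r-\alpha)^2)$ and integrate by parts, differentiating $r^{d-2}$, which returns $(d-2)I_{d-2}(\alpha)$ and hence the recursion $I_d = \alpha I_{d-1} + (d-2)I_{d-2}$. The base cases follow by direct evaluation: $I_1$ reduces to $\int_{-\alpha}^\infty \phi(s)\,\mathrm{d}s = \Phi(\alpha)$ after the substitution $s = r-\alpha$, and $I_2$ follows from the same $r=(r-\alpha)+\alpha$ split, whose first term integrates to $\phi(\alpha)$ and whose second term equals $\alpha I_1 = \alpha\Phi(\alpha)$.

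The only genuine subtlety is the boundary term of the integration by parts, namely $\big[-r^{d-2}\exp(-\tfrac12(r-\alpha)^2)\big]_0^\infty$, which must vanish. The Gaussian suppresses the upper limit for every $d$, but the lower limit requires $d \geq 3$ so that $r^{d-2}\to 0$ as $r\to 0$; this is precisely why the recursion must be anchored by the two independently computed base cases $I_1$ and $I_2$. Everything else is routine bookkeeping of the $2\pi$ and $|\Sigma|$ constants.
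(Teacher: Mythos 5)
Your argument is correct and is precisely the paper's own (very terse) proof --- complete the square in \eqref{eq:fU1} to expose $I_d$, integrate by parts for the recursion, and compute $I_1, I_2$ directly --- with the useful added observation that the boundary term of the integration by parts only vanishes for $d\ge 3$, which is exactly why both base cases must be supplied separately. One bookkeeping caveat: the identification $\int_0^\infty r'^{d-1}e^{-\frac{1}{2}(r'-\alpha)^2}\,\mathrm{d}r' = \sqrt{2\pi}\,I_d(\alpha)$ turns the prefactor $(2\pi)^{-d/2}$ into $(2\pi)^{-\frac{d-1}{2}}$, not the $(2\pi)^{-(\frac{d}{2}-1)}$ printed in \eqref{eq:fU_Rao}, so the power of $2\pi$ in the stated formula looks like a typo rather than something your (otherwise correct) derivation actually reproduces.
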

\begin{proof}
    Completing the square in the argument of the exponential under the integral in \eqref{eq:fU1} gives \eqref{eq:fU_Rao}, with the definition of $I_d$ in \eqref{eq:Id_def}. Integration by part of $I_d$ yields the recurrence equation. Finally, the initial values follow by direct calculation.
\end{proof}

The downside of \eqref{eq:fU_Rao} is of course that it depends on an integral form, even if this integral can be easily evaluated by recurrence. From \eqref{eq:fU2}, it is possible to obtain the density as a series. We give here this result in the general case and recover the result stated in \cite{Saw78} without direct proof. 
\begin{prop}
     With  $\lambda = (\mu^T\Sigma^{-1} \mu)^{\frac{1}{2}}$, $\bar{u} = \frac{u}{(u^T\Sigma^{-1} u)^{\frac{1}{2}}}$ and $\bar{\mu} = \frac{\mu}{(\mu^T\Sigma^{-1} \mu)^{\frac{1}{2}}}$, the probability density of the normalized Gaussian vector is
     \begin{equation}
          f_U(u) = \frac{\Gamma\left( \frac{d}{2}\right)}{2\pi^\frac{d}{2}} \, \frac{(u^T\Sigma^{-1} u)^{-\frac{d}{2}}}{|\Sigma|^\frac{1}{2}} 
   e^{ -\frac{1}{2} \lambda^2 }   \sum_{k=0}^\infty  \left( \lambda\bar{u}^T\Sigma^{-1}\bar{\mu}\right)^k \frac{\Gamma\left( \frac{d+k}{2}\right)}{k! \, \Gamma\left( \frac{d}{2}\right)}  
    \label{eq:serisSawgGeneral}
     \end{equation}
\end{prop}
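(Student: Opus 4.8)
The plan is to start directly from the integral representation \eqref{eq:fU2}, in which all the dependence on the data point $u$ has been isolated inside a single one-dimensional integral of the form $\int_0^\infty r^{d-1}\exp(-\tfrac12 r^2 + \beta r)\,\mathrm{d}r$, with $\beta = \lambda\,\bar u^\top\Sigma^{-1}\bar\mu$. Everything sitting outside that integral in \eqref{eq:fU2} already coincides with the prefactor of \eqref{eq:serisSawgGeneral}, so the whole proof reduces to expanding this integral as the stated power series in $\beta$.

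First I would expand the factor $\exp(\beta r)$ as its Maclaurin series $\sum_{k\ge 0}\beta^k r^k/k!$ and then interchange summation and integration. Since $\int_0^\infty r^{d-1}e^{-r^2/2}e^{|\beta| r}\,\mathrm{d}r$ is finite, the partial sums are dominated by an integrable majorant, so Fubini--Tonelli (equivalently dominated convergence applied to the partial sums) justifies the exchange regardless of the sign of $\beta$. This leaves a sum of Gaussian-type moments $\int_0^\infty r^{d-1+k}e^{-r^2/2}\,\mathrm{d}r$, each of which I would evaluate with the generalisation of identity \eqref{eq:gammaId}, namely $\int_0^\infty r^{m-1}e^{-r^2/2}\,\mathrm{d}r = 2^{m/2-1}\Gamma(m/2)$ applied with $m = d+k$, yielding $2^{(d+k)/2-1}\,\Gamma(\tfrac{d+k}{2})$ for the $k$-th term.

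It then remains to reassemble the constants: pulling the $u$-independent prefactor of \eqref{eq:fU2} back in front, writing $(2\pi)^{d/2} = 2^{d/2}\pi^{d/2}$, and factoring $\Gamma(\tfrac d2)$ out of the summand so as to reproduce the ratio $\Gamma(\tfrac{d+k}{2})/(k!\,\Gamma(\tfrac d2))$ appearing in \eqref{eq:serisSawgGeneral}; the isotropic specialisation $\Sigma=\sigma^2 I$ with $\|\mu\|=\|u\|=1$ should then collapse the series to \eqref{eq:agd_dist} as a consistency check. I expect the only delicate point to be the bookkeeping of the powers of two arising from the half-integer Gamma values $\Gamma(\tfrac{d+k}{2})$: these must be reconciled with the $2^{d/2}$ hidden inside $(2\pi)^{d/2}$ and with the chosen scaling of $\beta$, so making the normalisation of the summand land exactly on $(\lambda\bar u^\top\Sigma^{-1}\bar\mu)^k$ is the step I would verify most carefully, rather than the term-by-term integration, which is routine.
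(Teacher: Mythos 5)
Your plan is exactly the paper's proof: expand $\exp\bigl(\lambda r\,\bar u^\top\Sigma^{-1}\bar\mu\bigr)$ in its Maclaurin series inside the integral of \eqref{eq:fU2}, interchange sum and integral, and evaluate each resulting Gaussian moment via (the generalisation of) \eqref{eq:gammaId}. Your justification of the interchange by dominating the partial sums with $r^{d-1}e^{-r^2/2}e^{|\beta|r}$ is sound and is more careful than the paper, which swaps silently.

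However, the ``delicate point'' you defer to the end is not merely delicate --- it is exactly where the statement as printed breaks down, so you cannot leave it unverified. As you correctly note, the $k$-th moment is
\begin{equation*}
\int_0^\infty r^{\,d-1+k}e^{-r^2/2}\,\mathrm{d}r \;=\; 2^{\frac{d+k}{2}-1}\,\Gamma\!\left(\tfrac{d+k}{2}\right)\;=\;2^{\frac d2-1}\,2^{\frac k2}\,\Gamma\!\left(\tfrac{d+k}{2}\right).
\end{equation*}
After pulling $2^{d/2-1}$ out of the sum and cancelling it against the $2^{d/2}$ hidden in $(2\pi)^{d/2}$, each summand retains a factor $2^{k/2}$: the series argument comes out as $\bigl(\sqrt2\,\lambda\bar u^\top\Sigma^{-1}\bar\mu\bigr)^k$, not $\bigl(\lambda\bar u^\top\Sigma^{-1}\bar\mu\bigr)^k$. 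The paper's own proof drops this $2^{k/2}$ without comment in its final displayed line, so the mismatch is inherited by \eqref{eq:serisSawgGeneral}, \eqref{eq:serisSawgIso} and \eqref{eq:agd_dist}. A normalisation check confirms the $\sqrt2$ must be there: in the isotropic case $d=3$, $\Sigma=I$, $\mu=\kappa e_3$, integrating a density with series argument $(c\kappa t)^k$ over the sphere gives $e^{-\kappa^2/2+c^2\kappa^2/4}$, which equals $1$ only for $c=\sqrt2$. So your method is the right one and identical to the paper's, but carried out honestly it proves the proposition with $\lambda\bar u^\top\Sigma^{-1}\bar\mu$ replaced by $\sqrt2\,\lambda\bar u^\top\Sigma^{-1}\bar\mu$ (equivalently, with an extra $2^{k/2}$ in the summand); obtaining the formula exactly as stated would require redefining $\lambda$ (or $\kappa$) to absorb the $\sqrt2$, which neither you nor the paper does.
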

\begin{proof}
    In the integral in \eqref{eq:fU2}, we can expand the exponential $\exp{\left( \lambda r ~ \bar{u}^T\Sigma^{-1}\bar{\mu} \right)}$ in Taylor series, so that
    \begin{align}
    & \int_0^\infty r^{d-1} \exp{\left( -\frac{1}{2} r^2 + \lambda r ~ \bar{u}^T\Sigma^{-1}\bar{\mu} \right)} \mathrm{d}r \\
    & =  \int_0^\infty r^{d-1} \exp{\left( -\frac{1}{2} r^2 \right)} 
    \sum_{k=0}^\infty \frac{1}{k!} \left( \lambda r ~ \bar{u}^T\Sigma^{-1}\bar{\mu}\right)^k \mathrm{d}r \\
    & = \sum_{k=0}^\infty \frac{1}{k!} \left( \lambda\bar{u}^T\Sigma^{-1}\bar{\mu}\right)^k \int_0^\infty r^{d-1+k} \exp{\left( -\frac{1}{2} r^2 \right)} \\
    & = 2^{\frac{d}{2}-1} \sum_{k=0}^\infty \frac{1}{k!} \left( \lambda\bar{u}^T\Sigma^{-1}\bar{\mu}\right)^k \Gamma\left( \frac{d+k}{2}\right)
    \end{align}
    where the last line follows from the identity \eqref{eq:gammaId}. Plugging this in \eqref{eq:fU2} and simplifying yield \eqref{eq:serisSawgGeneral}.
\end{proof}
Note that the first term in \eqref{eq:serisSawgGeneral} is the inverse of the unit-sphere's surface $\omega_{d-1}$. In the isotropic case, that is $\Sigma = \sigma^2$1, \eqref{eq:serisSawgGeneral} reduces to
     \begin{equation}
          f_U(u) = \frac{\Gamma\left( \frac{d}{2}\right)}{2\pi^\frac{d}{2}} \,  
   e^{ -\frac{1}{2} \lambda^2 }   \sum_{k=0}^\infty  \left( \lambda u^T\bar{\mu}\right)^k \frac{\Gamma\left( \frac{d+k}{2}\right)}{k! \, \Gamma\left( \frac{d}{2}\right)}  
    \label{eq:serisSawgIso}
     \end{equation}
     where we used the fact that $u^Tu=1$ and where $\bar{\mu}$ is now $\bar{\mu} = \frac{\mu}{\left(\mu^T\mu\right)^{\frac{1}{2}}}$. This is the formula given in \cite{Saw78}, up to minor notations differences. Finally, for $\mu = 0$, \eqref{eq:serisSawgIso} reduces to the uniform distribution on the unit-sphere  $f_U(u) = 1/\omega_{d-1}$. 

Finally, it is possible to obtain a closed form in terms of a special function. 
\begin{prop}
     With  $\lambda = (\mu^T\Sigma^{-1} \mu)^{\frac{1}{2}}$ and $\gamma=\frac{{u}^T\Sigma^{-1}{\mu}}{(u^T\Sigma^{-1} u)^{\frac{1}{2}}}$, the probability density of the normalized Gaussian vector is
     \begin{equation}
         f_U(u) = \frac{(u^T\Sigma^{-1} u)^{-\frac{d}{2}}}{(2\pi)^\frac{d}{2} |\Sigma|^\frac{1}{2}} 
     e^{ -\frac{1}{2} \lambda^2-\frac{1}{8}\gamma^2} \Gamma(d) D_{-d}\left( {\sqrt{2}\gamma}\right),
    \label{eq:fU_parabolicCylinder}
     \end{equation}
     where $D_{-d}$ is a Parabolic cylinder function.
\end{prop}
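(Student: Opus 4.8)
The plan is to evaluate the same one-dimensional integral that underlies the two previous propositions, but this time in closed form. Starting from \eqref{eq:fU2}, all the $u$-dependence beyond the explicit prefactor $\frac{(u^T\Sigma^{-1}u)^{-d/2}}{(2\pi)^{d/2}|\Sigma|^{1/2}}e^{-\lambda^2/2}$ is carried by
\[
J \;=\; \int_0^\infty r^{d-1}\exp\!\Big(-\tfrac12 r^2 + \gamma\, r\Big)\,\mathrm{d}r,
\]
where the linear coefficient is precisely $\gamma = \lambda\,\bar u^T\Sigma^{-1}\bar\mu = \frac{u^T\Sigma^{-1}\mu}{(u^T\Sigma^{-1}u)^{1/2}}$. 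Proposition~1 handled $J$ through a recurrence and Proposition~2 through a Taylor expansion; here I would instead recognize $J$ as an integral representation of a parabolic cylinder function, which is what produces the compact closed form.

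The key step is to invoke the standard table identity $\int_0^\infty x^{\nu-1}e^{-\beta x^2 - c x}\,\mathrm{d}x = (2\beta)^{-\nu/2}\Gamma(\nu)\,e^{c^2/(8\beta)}\,D_{-\nu}\!\big(c/\sqrt{2\beta}\big)$, valid for $\Re\beta>0$ and $\Re\nu>0$ (Gradshteyn--Ryzhik, 3.462.1). Matching $\nu=d$, $\beta=\tfrac12$, and $c$ equal to the negative of the linear coefficient (so that $-c\,r = +\gamma\,r$) rewrites $J$ as $\Gamma(d)$ times an exponential factor times $D_{-d}$ evaluated at a rescaled argument. Since $d$ is a positive integer, the order $-d$ satisfies $\Re(-d)<0$, so the representation applies without issue. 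As a self-contained alternative, one can first complete the square, $-\tfrac12 r^2 + \gamma r = -\tfrac12(r-\gamma)^2 + \tfrac12\gamma^2$, which ties $J$ back to the integral $I_d$ of Proposition~1, and then apply the integral representation of $D_{-d}$ to the shifted Gaussian.

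Finally I would substitute the closed form of $J$ back into \eqref{eq:fU2}, merge the $e^{-\lambda^2/2}$ prefactor with the exponential produced by the integral, and pull out the constant $\Gamma(d)$, reading off the stated expression \eqref{eq:fU_parabolicCylinder}.

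The main obstacle is purely the bookkeeping of constants and conventions: the $\beta=\tfrac12$ normalization simultaneously fixes the scaling inside the argument of $D_{-d}$ and the exact coefficient and sign of the $\gamma^2$ term in the exponential, so these must be tracked carefully to land on the claimed argument $\sqrt2\,\gamma$ and factor $\exp(-\tfrac18\gamma^2)$ under whichever convention for $D$ is adopted. As consistency checks, I would expand $D_{-d}$ in its known power series and verify agreement with the series form \eqref{eq:serisSawgGeneral}, and confirm that the $\mu=0$ limit (hence $\gamma=0$, $\lambda=0$) collapses, in the isotropic case, to the uniform density on the sphere recorded in \eqref{eq:unifsphere}.
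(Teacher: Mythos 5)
Your proposal follows essentially the same route as the paper's own proof: reduce to the radial integral in \eqref{eq:fU2} and evaluate it in closed form via the Gradshteyn--Ryzhik identity 3.462 with $\nu=d$, $\beta=\tfrac12$, then substitute back and collect constants. If anything, you are more careful than the paper about matching the sign of the linear term (the table entry has $e^{-\gamma x}$ while the integrand has $e^{+\gamma r}$) and about how $\beta=\tfrac12$ fixes the argument of $D_{-d}$ and the coefficient of the $\gamma^2$ term, which is exactly the bookkeeping one must track to confirm the stated constants.
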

\begin{proof}
A result in the celebrated Tables of integrals, Series and Products of Gradshteyn and Ryzhik states, \cite[eq. 3.462]{Gradshteyn2014}, that
\begin{equation}
    \int_0^\infty x^{\nu -1} e^{-\beta x^2 - \gamma x} \mathrm{d}x = (2\beta)^{-\nu/2}\Gamma(\nu) e^{-\frac{\gamma^2}{8\beta}} D_{-\nu}\left( \frac{\gamma}{\sqrt{2\beta}}\right) \text{for } \beta>0, \nu >0
\end{equation}
where $D_{\nu}$ is a parabolic cylinder function, \cite[eq. 9.240]{Gradshteyn2014}. We see that the integral in \eqref{eq:fU2} has precisely this form, with $\nu=d$, $\beta=1/2$, and $\gamma=\lambda \bar{u}^T\Sigma^{-1}\bar{\mu}$. Plugging this in \eqref{eq:fU2} and rearranging yield \eqref{eq:fU_parabolicCylinder}.  
\end{proof}

\section{Hyperparameter Search}

This appendix describes which hyper-parameter values have been tested for every compared method. As described in the main paper, the search has been conducted on one setup, namely CIFAR-100 with M=5k, and the resulting hyper-parameters have been used for all remaining scenarios. This strategy has been applied to every compared method for fair comparison.

\subsection{Augmentation strategy}

Some methods presented prove to gain from simple augmentations rather than more complex augmentations. To obtain the best performances possible for every compared method, we considered two augmentations strategies, which we named \textit{partial} and \textit{full}, respectively.

\paragraph{\textit{Partial} augmentation strategy.} The partial augmentation strategy is, as the name implies, composed of only a subpart of the augmentations used in the \textit{full} strategy. Precisely, it is only a sequence of a random crop and a random horizontal flip, with $p=0.5$.

\paragraph{\textit{Full} augmentation strategy.} The full augmentation strategy is composed a more augmentations. Namely, it is a sequence of random crop, horizontal flip, color jitter and random gray scale. Color jitter parameters are set to $(0.4, 0.4, 0.4, 0.1)$ and $p=0.8$. The probability of applying random gray scale is set to $0.2$.

\subsection{Hyper-parameters table}

Table \ref{tab:all_hp} is an exhaustive list of the hyper-parameters values tried during grid search. Note that for OCM we used the values given in their original work. Similarly, for GDumb, we experimented only with the augmentations and kept the other parameters as given in their original work.

\begin{table}[!ht]
    \setlength{\tabcolsep}{4pt}
    \centering
    % \resizebox{\textwidth}{!}{\begin{tabular}{c|c}
    \begin{tabular}{c|cc}
    Method & Parameter & Values \\
    \hline
    \multirow{5}*{ER}    & optim         & [SGD, Adam] \\
                        & weight decay  & [0, 1e-4] \\
                        & lr            & [0.0001, 0.001, 0.01, 0.1] \\
                        & momentum      & [0, 0.9] \\
                        & aug. strat.   & [full, partial] \\
     \hline
    \multirow{5}*{ER-ACE}   & optim         & [SGD, Adam] \\
                        & weight decay  & [0, 1e-4] \\
                        & lr            & [0.0001, 0.001, 0.01, 0.1] \\
                        & momentum      & [0, 0.9] \\
                        & aug. strat.   & [full, partial] \\
    \hline
    \multirow{5}*{A-GEM}    & optim         & [SGD, Adam] \\
                        & weight decay  & [0, 1e-4] \\
                        & lr            & [0.0001, 0.001, 0.01, 0.1] \\
                        & momentum      & [0, 0.9] \\
                        & aug. strat.   & [full, partial] \\
    \hline
    \multirow{7}*{DER++}    & optim         & [SGD, Adam] \\
                        & weight decay  & [0, 1e-4] \\
                        & lr            & [0.0001, 0.001, 0.01, 0.03] \\
                        & momentum      & [0, 0.9] \\
                        & aug. strat.   & [full, partial] \\
                        & alpha         & [0.1, 0.2, 0.5, 1.0] \\
                        & beta          & [0.5, 1.0] \\
    \hline
    \multirow{5}*{DVC}      & optim         & [SGD, Adam] \\
                        & weight decay  & [0, 1e-4] \\
                        & lr            & [0.0001, 0.001, 0.01, 0.1] \\
                        & momentum      & [0, 0.9] \\
                        & aug. strat.   & [full, partial] \\
    \hline
    \multirow{5}*{SCR}      & optim         & [SGD, Adam] \\
                        & weight decay  & [0, 1e-4] \\
                        & lr            & [0.0001, 0.001, 0.01, 0.1] \\
                        & momentum      & [0, 0.9] \\
                        & aug. strat.   & [full, partial] \\
    \hline
    \multirow{1}*{GDumb}    & aug. strat.   & [full, partial] \\
    \hline
    \multirow{6}*{FD-AGD}   & optim         & [SGD, Adam] \\
                        & weight decay  & [0, 1e-4] \\
                        & lr            & [0.0001, 0.0005,  0.001, 0.005, 0.01, 0.05, 0.1] \\
                        & momentum      & [0, 0.9] \\
                        & aug. strat.   & [full, partial] \\
                        & var           & [0.05, 0.5, 1, 2, 3, 4, 5, 10] \\
    \hline
    \multirow{6}*{FD-vMF}   & optim         & [SGD, Adam] \\
                        & weight decay  & [0, 1e-4] \\
                        & lr            & [0.0001,  0.001, 0.01, 0.1] \\
                        & momentum      & [0, 0.9] \\
                        & aug. strat.   & [full] \\
    \hline
    \multirow{6}*{PFC}   & optim         & [SGD, Adam] \\
                    & weight decay  & [0, 1e-4] \\
                    & lr            & [0.0001, 0.0005,  0.001, 0.005, 0.01, 0.05, 0.1] \\
                    & momentum      & [0, 0.9] \\
                    & aug. strat.   & [full, partial] \\
                    & var           & [0.05, 0.5, 1, 2, 3, 4, 5, 10] \\
\end{tabular}
\caption{Hyper-parameters tested for every method on CIFAR100, M=5k, 10 tasks.\label{tab:all_hp}}
\end{table}

\section{Hardware and computation}

For compared methods we trained on 2 RTX A5000 GPUs. Figure \ref{fig:time_consumption} references the training time of each method on CIFAR100 M=5k. Our method can achieve best performance while having a low computational overhead. Notably, the time consumption difference between SCR and FD-AGD/FD-vMF is due to the number of augmentations used for training.

\begin{figure}
    \centering
    \includegraphics{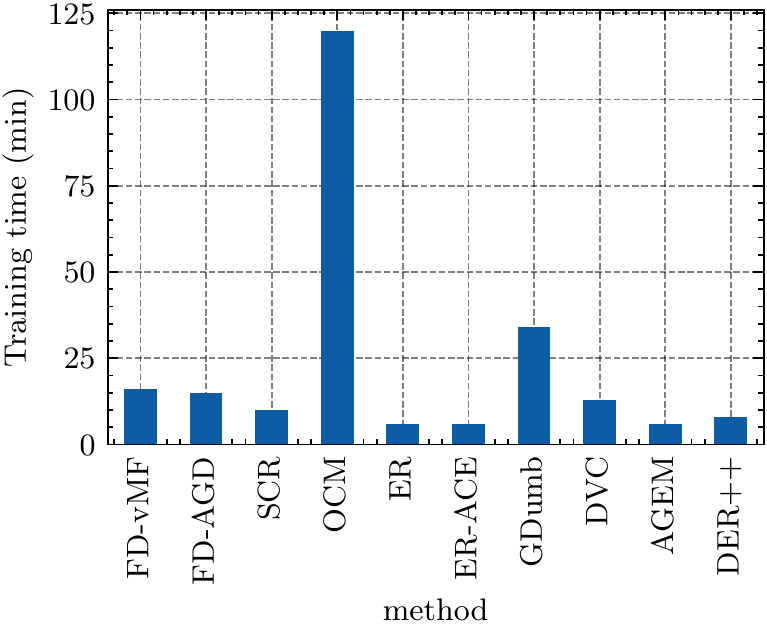}
    \caption{Time consumption in minutes, for every trained methods, on CIFAR100, M=5k, and 10 tasks.}
    \label{fig:time_consumption}
\end{figure}

\section{Additional Experiments}
In this section we gives more detail dataset used in the main paper.

\subsection{Datasets}
As described in the main paper, for the CIL case we experimented on CIFAR10, CIFAR100\cite{Krizhevsky2009LearningML} and Tiny ImageNet~\cite{le_tiny_2015} with blurry and clear task boundaries. In the Appendix, we included partial experiments on CORe50~\cite{lomonaco2017core50}.

\textbf{CIFAR10} contains 50,000 32x32 train images as well as 10,000 test images and is split into 5 tasks containing 2 classes each for a total of 10 distinct classes.

\textbf{CIFAR100} contains 50,000 32x32 train images as well as 10,000 test images and is split into 10 tasks containing 10 classes each for a total of 100 distinct classes.

\textbf{Tiny ImageNet} is a subset of the ILSVRC-2012 classification dataset and contains 100,000 64x64 train images as well as 10,000 test images and is split into 100 tasks containing 2 classes each for a total of 200 distinct classes.

\textbf{CORe50} is a domain incremental dataset designed for continuous object recognition. It is composed of $164,866$ 128×128 RGB images. We experimented in the New Instances setting and used sessions 3,7 and 10 for testing.

\subsection{Experiments in Domain Incremental Learning (DIL) scenario}
To further experiments on various Continual Learning scenarios, we included experiments on CORe50~\cite{lomonaco2017core50} for \textit{ER}, \textit{SCR} and \textit{AGD-FD} and various memory sizes. 

\paragraph{Domain Incremental Learning (DIL)} is another popular Continual Learning scenario where the distribution of the input data shift while keeping the same available classes between tasks. For example, the lighting conditions can changes from one task to the other. One popular dataset for evaluation in DIL is CORe50~\cite{lomonaco2017core50}.

\paragraph{Results}

We report the final average accuracy for considered methods on Table \ref{tab:core50}. It can be observed that our approach is on par with state-of-the art for every memory size. Additionally, the standard deviation of AGD-FD (ours) decreases for larger memory sizes, demonstrating superior stability for large memory size compared with other considered methods.

\begin{table}[!ht]
    \centering
    \begin{tabular}{l|lll}
           \hline
           \multicolumn{1}{c}{Method}          & \multicolumn{1}{c}{M=1k}   &  \multicolumn{1}{c}{M=2k}     &        \multicolumn{1}{c}{M=5k}  \\
           \hline\hline
           \text{ER}       & 32.0±2.6   & 36.6±3.0 & 39.3±2.2 \\
           \text{SCR}      & 42.1±2.0   & \textbf{46.5±3.2} & 48.6±2.3 \\
           \text{AGD-FD}   & \textbf{42.6±2.3}  & 46.0±1.6 & \textbf{50.0±0.9} \\
    \end{tabular}
    \caption{Final average accuracy on CORe50, new instances setting, for varying memory sizes. Mean and standard deviation over 5 runs are reported.}
    \label{tab:core50}
\end{table}

\end{appendices}
% ------------------------------------------------
% \paragraph{Strong focus on new classes} \textbf{Probably moving this + figure \ref{fig:loss_values} into appendix}.
% Another direct effect of fixing means is a considerably higher loss for new classes. Intuitively, when images from unseen classes are projected onto the latent space, their latent representations are most likely far from their affected means. This distance directly translates to large loss value and forces the model to focus on new classes. On figure \ref{fig:loss_values} we can observe the ratio between the loss for new and old classes. This ratio is showed for our method and SCR, and is clearly higher for the former.
% \begin{figure}[!ht]
%     \centering
%     \includegraphics[width=0.35\textwidth]{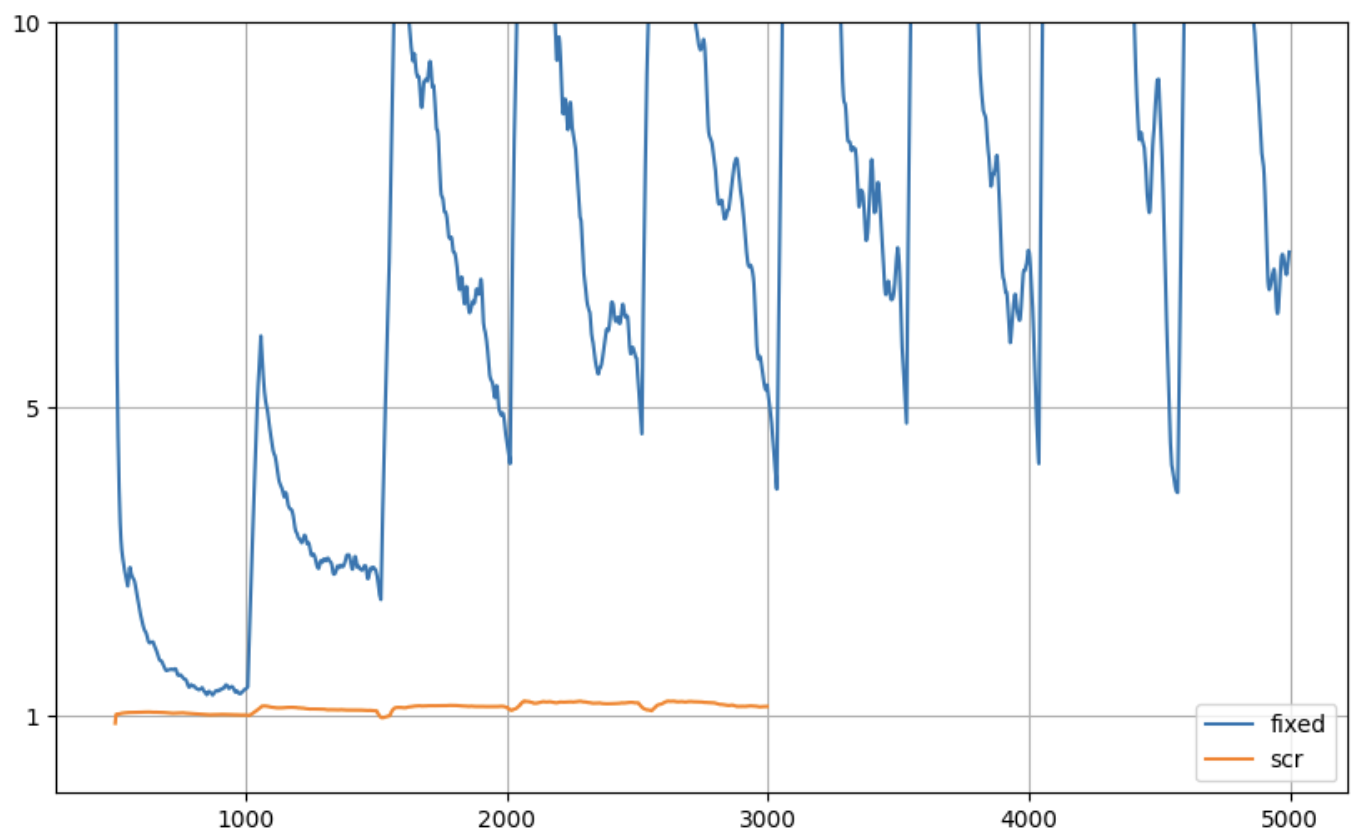}
%     \caption{Loss ratio between new and old classes on CIFAR100. Task change occurs every 500 batches.\label{fig:loss_values}}
% \end{figure}

%%%%%%%%%%%%%%%%%%%%%%%%%%%%%%%%%%%%%%%%%%%%%%%%%%%%%%%%%%%%

\end{document}